\documentclass[journal]{IEEEtran} 
\IEEEoverridecommandlockouts                         
\usepackage{pifont}
\usepackage{multirow}
\usepackage{graphicx}
\usepackage{multicol}
\usepackage{booktabs}
\usepackage{amsmath,amssymb}
\usepackage{amsthm}
\usepackage[utf8]{inputenc}
\usepackage[english]{babel}
\usepackage{multirow}
\usepackage[font=small]{caption}
\usepackage{float}
\usepackage[hidelinks]{hyperref}

\usepackage{lettrine}
\usepackage[]{algorithm2e}
\usepackage{algpseudocode}
\usepackage{cite}
\usepackage{lipsum}
\usepackage{color}
\usepackage[dvipsnames]{xcolor}
\usepackage{esdiff}
\usepackage{epstopdf}
\usepackage[normalem]{ulem}
\usepackage{soul}

\usepackage{subcaption}
\usepackage{xcolor}
\usepackage{colortbl}
\usepackage{balance}
\newtheorem{thm}{Theorem}[section]

\newtheorem{lem}{Lemma}[section]

\newtheorem{defn}{Definition}[section]

\newtheorem{rem}{Remark}[section]

\definecolor{pink}{rgb}{1, 0, 1}
\definecolor{orange}{rgb}{1, 0.7529, 0}
\definecolor{darkgreen}{rgb}{0, 0.8, 0}
\begin{document}

\title{TRACE: Coverage Path Planning for Unknown Environments Using Hierarchical Coverage Tree}
\author{
{Zongyuan Shen$^1$}, {Haodong Liu$^1$}, {Gao Wang$^1$}, {Shancheng Zhao$^1$}, {Dehua Zhou$^1$}, {Yaming Ou$^2$},\\{Zhongqiang Ren$^3$}, {Yikui Zhai$^4$}, and {C. L. Philip Chen$^5$, \textit{Life Fellow, IEEE}}

\thanks {$^1$College of Information Science and Technology, Jinan University, Guangzhou 510632, China.}
\thanks {$^2$School of Artificial Intelligence, University of Chinese Academy of Sciences, Beijing 100049, China.}
\thanks {$^3$Global College, Shanghai Jiao Tong University, Shanghai 200240, China.}
\thanks {$^4$School of Electronics and Information Engineering, Wuyi University, Jiangmen 529000, China.}
\thanks {$^5$School of Computer Science and Engineering, South China University of Technology, Guangzhou 510006, China.}
}

\maketitle

\begin{abstract}
This paper presents a novel online coverage path planning (CPP) algorithm, called TRACE, for real-time coverage of unknown environments. TRACE is built upon a hierarchical coverage tree that provides a global representation of the evolving connectivity of the uncovered space. As the environment is incrementally revealed and covered, newly discovered obstacles and covered cells may fragment the remaining uncovered space into disconnected regions. TRACE recursively expands the corresponding tree nodes to explicitly represent these regions and organize them for subsequent coverage planning. Based on the updated tree, an incremental global tour is maintained to guide the coverage process. TRACE locally refines only the affected portions while preserving the visiting order of unchanged regions, thereby reducing the computational burden of global replanning and maintaining a consistent coverage progression. Guided by the global tour, a local planner generates back-and-forth coverage paths and switches to global-tour-aware planning to efficiently complete the target regions. Theoretical analysis establishes the computational complexity and complete coverage property of TRACE, and derives an approximation bound for the incremental global tour refinement. The performance of TRACE is evaluated through extensive high-fidelity simulations and real-robot experiments using a mobile robot. Comparative evaluations against six existing CPP methods demonstrate significant improvements in coverage time, path length, overlap ratio, and number of turns.
\end{abstract}

\begin{IEEEkeywords}
Coverage path planning, motion and path planning, unknown environments, autonomous robots.

\end{IEEEkeywords}

\section{Introduction}
Coverage path planning (CPP) is a fundamental problem in robotics that aims to generate a trajectory allowing a robot to completely cover a target workspace while minimizing task-related costs such as path length, overlap ratio, number of turns, and coverage time~\cite{shen2026cstar}. CPP has a broad range of applications, including agricultural operations (e.g., weeding~\cite{maini2022online} and harvesting~\cite{yi2024view}); household services (e.g., floor cleaning~\cite{ramesh2024} and lawn mowing~\cite{zhou2025coverage}); environmental monitoring (e.g., underwater mapping~\cite{Ou2025} and oil spill cleaning~\cite{luo2025entropy}); and industrial tasks (e.g., quality inspection~\cite{wang2023hierarchical}, crack filling~\cite{veeraraghavan2024complete}, and spray painting~\cite{yang2024improved}). In many practical applications, however, complete environmental information is unavailable prior to deployment. Therefore, it is essential to develop online CPP methods that can incrementally discover the workspace through onboard sensing while simultaneously adapting the coverage path according to newly acquired information. 

Various methods have been developed for online CPP. One line of works generates coverage trajectories by selecting the next target cell in the robot's local neighborhood using lightweight deterministic rules~\cite{gabriely2001spanning,gonzalez2005bsa,ferranti2007brick,viet2013ba}, potential fields~\cite{luo2008bioinspired,sun2019complete,cai2023,song2018}, or reward functions~\cite{hassan2019ppcpp}. Although such strategies are computationally efficient, they do not explicitly account for the global structure and evolution of the remaining uncovered space. In complex environments, locally reasonable decisions may fragment the remaining uncovered space into disconnected regions, causing some regions to be bypassed and later revisited through long return paths. To mitigate this problem, some methods~\cite{li2023sp2e,huo2025,li2025} incorporate connectivity preservation into target selection by evaluating whether visiting a candidate cell would disconnect the remaining uncovered area. By avoiding connectivity-breaking decisions, these methods reduce fragmentation-induced redundant travel. However, they still rely on local feasibility checks and may fail when all locally available candidates would disconnect the uncovered area. In such cases, fragmented residual regions may still emerge, resulting in considerable redundant travel. Another line of works explicitly handles fragmented regions after they emerge, either by organizing disconnected regions within the currently obstacle-free space for subsequent coverage~\cite{shen2025cap} or by covering potentially isolated regions near the robot before they are bypassed~\cite{shen2026cstar}. However, their detection and handling of fragmentation are still largely driven by local environmental information. As a result, fragmented regions arising away from robot vicinity may not be explicitly considered when they emerge, which can lead to additional backtracking and reduced coverage efficiency in complex environments.

In this regard, this paper presents TRACE, a novel online CPP algorithm that maintains a global representation of the evolving uncovered space using a hierarchical coverage tree, which tracks connectivity changes of the remaining uncovered regions and guides the subsequent coverage process. As the environment is incrementally revealed and covered, newly discovered obstacles and newly covered cells may alter the connectivity of the residual region associated with a leaf node. If a residual region is split into multiple disconnected components, the corresponding leaf node is expanded into multiple child nodes, each representing one of these components. Throughout this process, the tree progressively evolves from a coarse representation of the entire workspace into a hierarchical representation of the remaining uncovered regions. The resulting leaf nodes represent candidate regions to be considered in subsequent coverage planning. Based on the updated hierarchical coverage tree, a global tour is maintained to determine the visiting order of the remaining candidate regions. Rather than replanning the entire tour whenever the tree changes, TRACE incrementally refines only the portions affected by tree expansion. Specifically, when a leaf node is expanded, its position in the current tour is replaced by an optimized visiting sequence of the newly generated child nodes, while the relative order of the unaffected nodes is preserved. This incremental update reduces the computational burden of global replanning while avoiding unnecessary reordering of unaffected regions and maintaining a consistent coverage progression. Guided by the updated global tour, the local planner progressively covers the selected target region using a back-and-forth strategy and switches to a global-tour-aware strategy once the region becomes fully explored. This adaptive local planning scheme improves local coverage efficiency while maintaining consistency with the global tour. In summary, our contributions
are as follows:

\begin{itemize}
    \item A hierarchical coverage tree globally represents the evolving connectivity of the uncovered space and supports efficient coverage path planning.
    \item A global tour planning method locally refines changed tour segments while preserving unchanged visiting orders, providing global guidance for subsequent coverage.
    \item A local path planning method provides efficient dead-end recovery and global-tour-aware coverage path generation, reducing redundant travel and facilitating efficient transitions between successive target regions.
    \item Comparative evaluations of proposed algorithm through high-fidelity simulations and real-robot experiments.
\end{itemize}

The remaining paper is organized as follows. Section~\ref{sec:review} reviews the related work. Section~\ref{sec:problem_description} describes the CPP problem. Section~\ref{sec:algorithm} presents the details of the TRACE algorithm. Section~\ref{sec:analysis} provides the algorithm analysis. Section~\ref{sec:results} shows the comparative evaluation results of TRACE. Finally, Section~\ref{sec:conclusions} concludes the paper with recommendations for future work.

\section{Related Work}
\label{sec:review}

Detailed surveys of existing CPP methods are presented in~\cite{galceran2013survey,shen2026coverage}. CPP methods can be classified as offline or online according to the availability of environmental knowledge. Offline methods assume that the environment is known and generate coverage paths before deployment. For example, Cao et al.~\cite{cao2020hierarchical} sampled candidate viewpoints from a prior map and then locally selected viewpoints for detailed coverage, Shen et al.~\cite{shen2021cppnet} utilized a graph neural network to generate coverage paths from occupancy grid maps, Ramesh et al.~\cite{ramesh2022optimal} formulated turn-minimizing coverage as an optimization problem, and Lu et al.~\cite{lu2023tmstc} partitioned the workspace into rectangular subareas for coverage planning. However, their performance can degrade if the prior information
is incomplete and/or incorrect. In contrast, online CPP methods generate coverage paths in real time using environmental information acquired during navigation, thus they are suitable for unknown environments.

A large class of online CPP methods generates coverage paths by selecting target cells in the robot's local neighborhood. Gabriely and Rimon~\cite{gabriely2001spanning} proposed the spanning tree covering (STC) algorithm, which constructs a spanning tree over a two-resolution grid and generates a coverage path by circumnavigating the tree. Gonzalez et al.~\cite{gonzalez2005bsa} proposed the backtracking spiral algorithm (BSA), which follows the boundary of uncovered and obstacle regions to generate a spiral coverage path. Ferranti et al.~\cite{ferranti2007brick} developed the brick-and-mortar (BM) algorithm, which progressively enlarges inaccessible regions by selecting the neighboring uncovered cell with the most surrounding inaccessible cells. Viet et al.~\cite{viet2013ba} proposed the BA$^*$ algorithm, which follows a predefined directional priority to generate a back-and-forth coverage path. Luo and Yang~\cite{luo2008bioinspired} proposed the bio-inspired neural network (BINN) algorithm, where each grid cell is associated with a neural activity such that uncovered cells provide excitatory inputs while obstacle cells provide inhibitory inputs. The neighboring cell with the highest neural activity is selected as the target. Later variants~\cite{sun2019complete,cai2023} simplified the neural activity update to reduce computational time. Song and Gupta~\cite{song2018} proposed the $\varepsilon^*$ algorithm, which constructs a multi-level potential field for online coverage. The robot normally selects the neighboring cell with the highest potential at the lowest level and switches to higher-level fields when escaping dead-ends. Hassan and Liu~\cite{hassan2019ppcpp} proposed the predator-prey CPP (PPCPP) algorithm, which evaluates neighboring cells using a reward function combining predation avoidance, motion smoothness, and boundary coverage, and selects the cell with the highest reward as the target. These methods are computationally efficient and suitable for real-time operation; however, their decisions are mainly based on local information and may generate myopic decision, leading to dead-ends, fragmented uncovered regions, and redundant travel in complex environments. 

To improve the coverage efficiency, several methods incorporate the connectivity of uncovered region into coverage decision-making. Li et al.~\cite{li2023sp2e} adapted the spiral direction to select target cells that preserve the connectivity of the uncovered region, while Huo et al.~\cite{huo2025} similarly avoided target cells whose visitation would cause disconnection. Li et al.~\cite{li2025} extended this idea to the subarea level by partitioning the environment into stripe-shaped regions and prioritizing the coverage of subareas whose removal preserves the connectivity of the remaining uncovered space. However, these methods still rely on local feasibility checks. When all candidate targets would disconnect the uncovered area, fragmented regions may still be produced, leading to considerable redundant travel. Other methods instead handle disconnected regions once they arise. Shen et al.~\cite{shen2025cap} proposed the CAP algorithm, which identifies disconnected subareas around the robot and computes a global traversal tour to guide their subsequent coverage. Shen et al.~\cite{shen2026cstar} proposed the C$^*$ algorithm, which performs back-and-forth coverage while identifying potential disconnected regions near the robot that may otherwise be bypassed. These regions are covered in situ using TSP-based local trajectories before the robot resumes back-and-forth coverage. However, fragmented regions arising away from robot vicinity may not be explicitly considered when they emerge, which can lead to additional backtracking and reduced coverage efficiency in complex environments.
\section{Problem Description}\label{sec:problem_description}

As shown in Fig.~\ref{fig:overview}, consider a mobile robot equipped with:
\begin{itemize}
\item A range detector (e.g., LiDAR) to detect obstacles within its field of view and map the environment. 
\item A localization device (e.g., IMU) to provide position information of the robot.
\item A coverage device or sensor (e.g., cleaning brush) to perform the task (e.g., floor cleaning).
\end{itemize} 

Let $\mathcal{X} \subset\mathbb{R}^2$ be the initially unknown area whose boundary is defined either by a hard border (e.g., a wall) or a soft barrier (e.g., user-defined buffer zone). The area is populated by static obstacles. For environment mapping and coverage planning, a uniform tiling of $\mathcal{X}$ is constructed as defined below.

\begin{defn}[Uniform Tiling]
\label{def:tiling}
A set $\mathcal{T}=\{\tau_j\subset\mathbb{R}^{2}, \ j=1,\ldots,|\mathcal{T}|\}$ is called a uniform tiling of $\mathcal{X}$ if its elements, called cells, satisfy the following conditions:
\begin{itemize}
    \item Each cell $\tau_j$ is a square of identical size;
    \item The interiors of any two distinct cells do not overlap, i.e., $\operatorname{int}(\tau_i) \cap \operatorname{int}(\tau_j)=\emptyset,\ i\neq j$;
    \item The area $\mathcal{X}$ is contained in the union of all cells, i.e., $\mathcal{X} \subseteq \bigcup_{\tau_j\in\mathcal{T}}\tau_j$.
\end{itemize}
\end{defn}

\begin{defn}[Path-Connected Region]\label{define:pathconnected}
A region $\mathcal{S} \subseteq \mathcal{T}$ is said to be path-connected if, for any two cells $\tau_i,\tau_j \in \mathcal{S}$, there exists a collision-free path consisting entirely of cells in $\mathcal{S}$ that connects $\tau_i$ and $\tau_j$.
\end{defn}

Let $\mathcal{T}_{o}\subset\mathcal{T}$ denote the subset of cells occupied by obstacles. The obstacle-free cell set $\mathcal{T}_{f}=\mathcal{T}\setminus\mathcal{T}_{o}$ constitutes the path-connected coverage space to be covered by the robot.

\begin{defn}[Coverage Path]
\label{def:coverage_path}
Let $\tau(k)\in\mathcal{T}_f$ denote the obstacle-free cell visited by the robot at discrete step $k\in\mathbb{Z}_{\geq 0}$. A coverage
path is defined as the ordered cell sequence $\Gamma_K=[\tau(0),\tau(1),\ldots,\tau(K)]$, where $\tau(0)$ and $\tau(K)$ denote the start and end cells, respectively.
\end{defn}

Let $\mathcal{C}(\tau(k)) \subset \mathcal{T}_f$ denote the set of cells covered by the
coverage device when the robot visits cell $\tau(k)$. The objective of the proposed algorithm is to generate a coverage path that achieves complete coverage of $\mathcal{T}_f$, as defined below. 

\begin{defn}[Complete Coverage]
\label{def:complete_coverage}
Given a coverage path $\Gamma_K=[\tau(0),\tau(1),\ldots,\tau(K)]$, the coverage of $\mathcal{T}_f$ is complete if there exists a finite $K\in\mathbb{Z}_{+}$ such that $\mathcal{T}_f
    \subseteq
    \bigcup_{k=0}^{K}\mathcal{C}(\tau(k))$.
\end{defn}
\section{TRACE Algorithm}\label{sec:algorithm}

This section presents the details of the proposed TRACE algorithm, which adaptively updates the coverage path as the environment is revealed through onboard sensing. An overview of the TRACE algorithm is shown in Fig.~\ref{fig:overview}. During navigation, the symbolic map is continuously updated according to sensor measurements and coverage progress (Sec.~\ref{mapping}). Based on the updated map, a hierarchical coverage tree is incrementally constructed to represent the remaining coverage regions and their connectivity changes (Sec.~\ref{tree_construct}). The active leaf nodes representing the remaining coverage regions are then extracted from the updated tree. The global visiting order of the active leaf nodes is maintained through incremental tour refinement, where only the portions affected by tree expansion are locally updated while the visiting order of unaffected regions is retained (Sec.~\ref{global_tour}). A target node is then selected from the updated global tour, and the local planner generates a coverage path within its subarea (Sec.~\ref{localCover}). These steps are repeated until complete coverage of the obstacle-free space is achieved.

\begin{figure}[t]
        \centering        \includegraphics[width=0.5\textwidth]{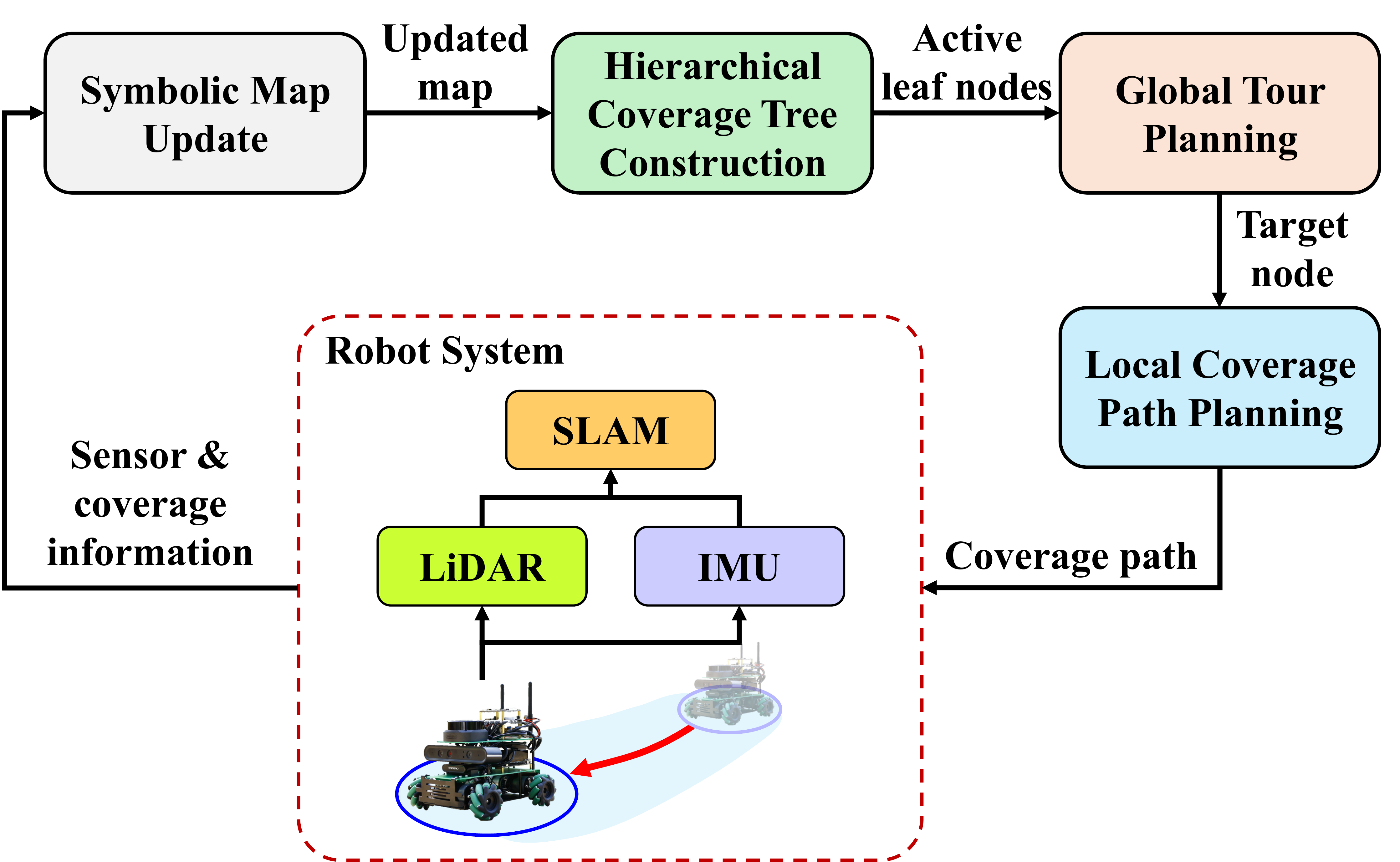}
    \caption{Overview of the proposed TRACE algorithm.}\label{fig:overview} 
    \vspace{-1.0em}
 \end{figure}

\subsection{Symbolic Map Update}
\label{mapping}

During navigation, the robot continuously updates a symbolic representation of the environment using onboard sensor measurements. This representation jointly encodes the occupancy and coverage status of each cell, as defined below.

\begin{defn}[State Encoding of Cell]
\label{stateencode_cell}
Let $\Phi: \mathcal{T} \rightarrow \{U,O,F^c,F^u\}$ be the symbolic state encoding of the tiling $\mathcal{T}$. All cells are initially assigned the state $U$. As environmental information is acquired during navigation, a cell occupied by an obstacle is assigned $O$. A free cell is assigned $F^c$ after it has been covered by the robot and $F^u$ otherwise.
\end{defn}

A morphological closing operator~\cite{soille2013} is applied to the symbolic map. This operation first dilates and then erodes the obstacle regions to smooth obstacle boundaries and suppress small-scale obstacles that are typically caused by sensor noise. The updated symbolic map provides the basis for the incremental construction of the hierarchical coverage tree.

\begin{figure*}[t]
    \centering
    \subfloat[Tree initialization.]{
        \includegraphics[width=0.32\textwidth]{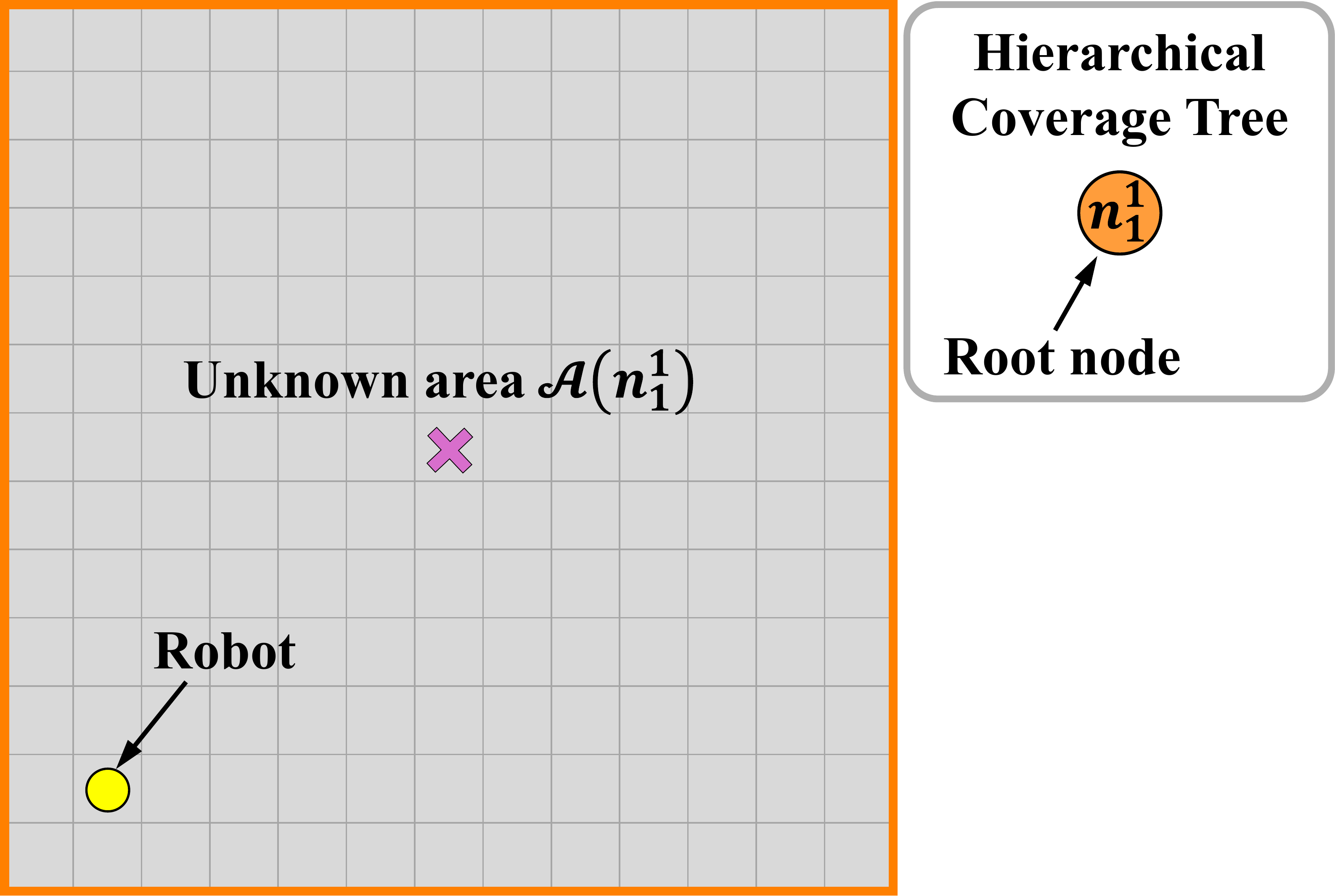}\label{fig:tree_example_part1}}\quad \hspace{-10pt}
        \centering
    \subfloat[First tree expansion and global tour planning.]{
        \includegraphics[width=0.32\textwidth]{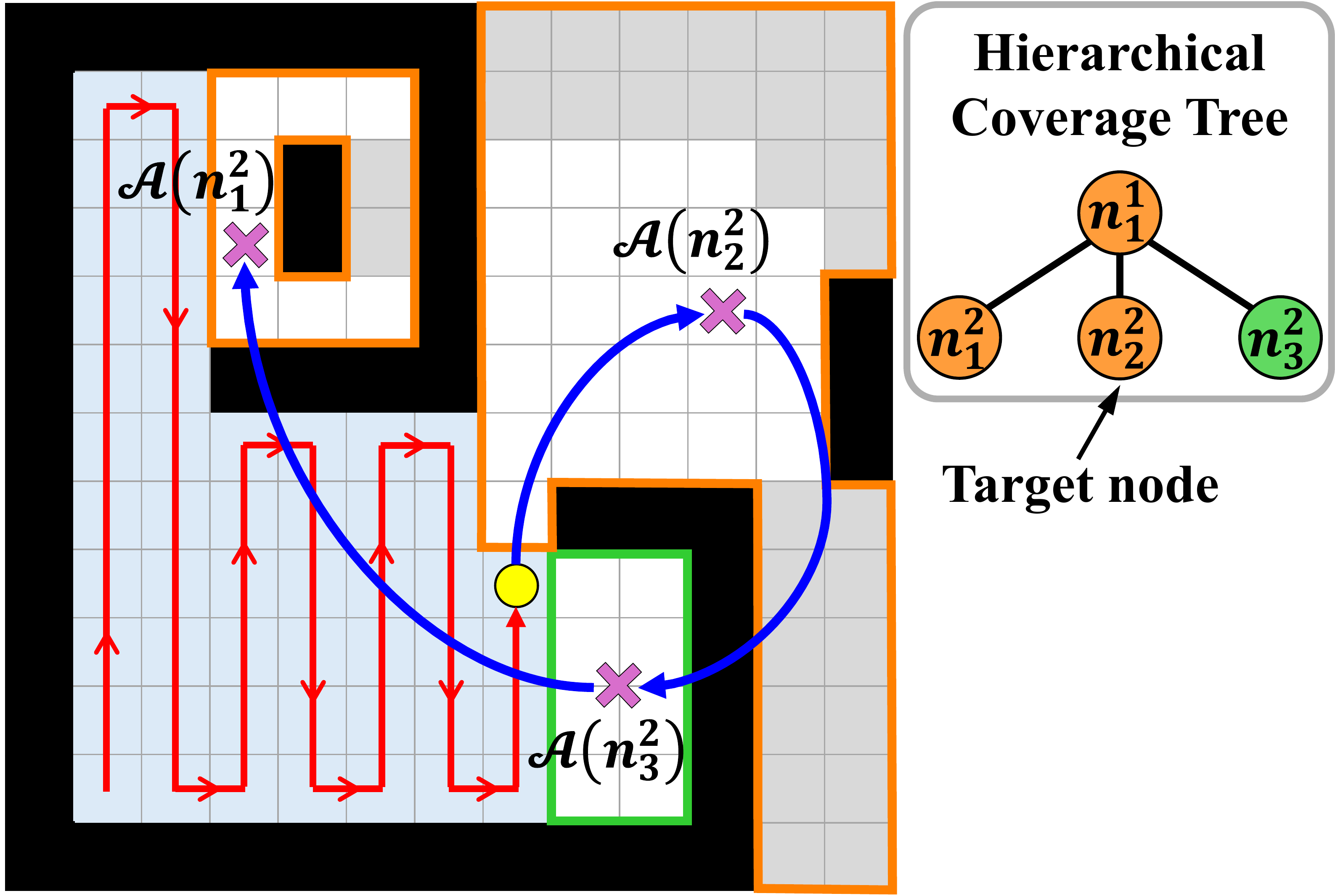}\label{fig:tree_example_part2}}\quad \hspace{-10pt}
    \centering
    \subfloat[Subsequent tree expansion and tour update.]{
        \includegraphics[width=0.32\textwidth]{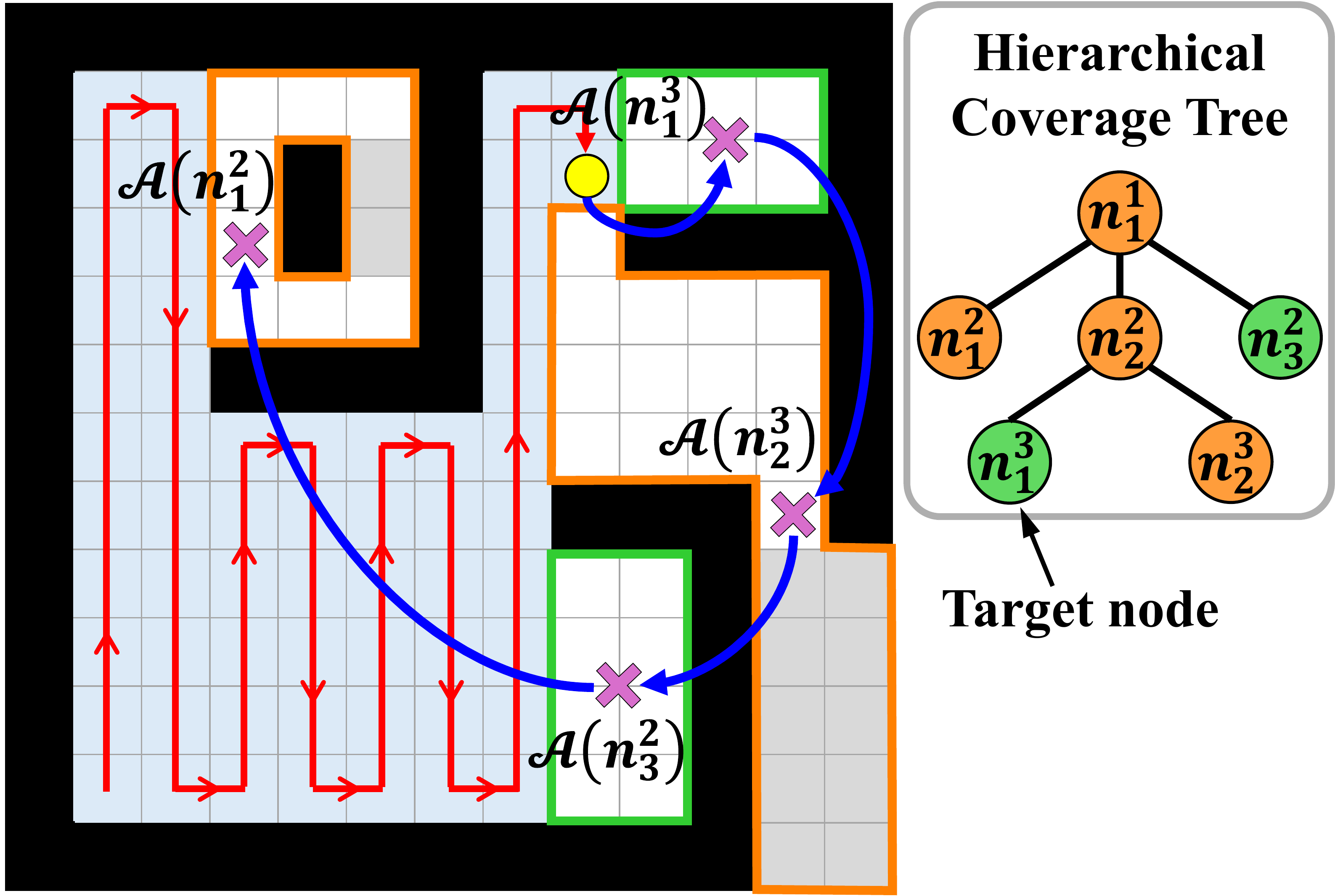}\label{fig:tree_example_part3}}\vspace{0.5em}\quad
        \centering
    \subfloat{
    \includegraphics[width=0.95\textwidth]{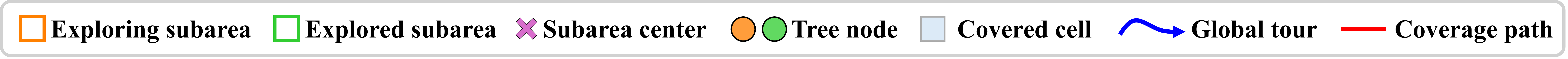}\vspace{0.5em}\label{fig:legend}}\\
    \caption{Illustration of hierarchical coverage tree evolution and incremental coverage planning. (a) The entire unknown workspace is initially represented by the root node $n_1^1$. (b) As the environment is revealed and covered, the residual region of $n_1^1$ is divided into three disconnected subareas, represented by child nodes $n_1^2$, $n_2^2$, and $n_3^2$. A global tour is then generated to guide their subsequent coverage. (c) With further environmental discovery and coverage progress, the residual region associated with $n_2^2$ is further divided into two disconnected subareas, represented by child nodes $n_1^3$ and $n_2^3$. The hierarchical coverage tree and global tour are updated accordingly.}\label{fig:algorithm_example} 
    \vspace{-1.0em}
 \end{figure*}

\subsection{Hierarchical Coverage Tree Construction}
\label{tree_construct}

The hierarchical coverage tree serves three main purposes: i) recording the hierarchical decomposition of the search space, ii) tracking coverage progress, and iii) supporting coverage decision-making. The tree is formally defined as follows.

\begin{defn}[Hierarchical Coverage Tree]\label{define:hierarchical_tree}
A hierarchical coverage tree $\mathcal{G} = (\mathcal{N}, \mathcal{E})$ is defined as a rooted tree, where:
\begin{itemize}
    \item For each level $\ell=1,\ldots,L$, where $L$ is the current depth of the tree, let $\mathcal{N}^{\ell} = \{n_i^{\ell}: i = 1,\ldots,I_{\ell}\}$ be the node set at level $\ell$, with $\mathcal{N}=\bigcup_{\ell=1}^{L}\mathcal{N}^{\ell}$. Each node $n_i^{\ell}$ maintains: 
    \begin{itemize}
        \item[a)] associated subarea $\mathcal{A}(n_i^{\ell})$, which is assigned when the node is created and remains fixed thereafter;
        \item[b)] residual region $\mathcal{R}(n_i^{\ell})=\{\tau \in \mathcal{A}(n_i^{\ell}):\Phi(\tau)\in\{U,F^u\} \}$, which contains only unknown and free-uncovered cells;
        \item[c)] center position $p(n_i^{\ell})$ of the residual region; and
        \item[d)] symbolic state $\mathcal{Q}(n_i^{\ell})$.
    \end{itemize}
    \item $\mathcal{E} = \{e_\alpha: \alpha = 1,\ldots,|\mathcal{E}|\}$ is the edge set. Each edge $e_{\alpha} \equiv (n_i^{\ell},n_{i'}^{\ell+1})$ connects a parent node $n_i^{\ell}$ at level $\ell$ to one of its child nodes $n_{i'}^{\ell+1}$ at level $\ell+1$.
\end{itemize}
\end{defn}

\begin{defn}[Exploring Region]\label{define:ExploringSubarea} A region $\mathcal{S} \subseteq \mathcal{T}$ is said to be \textit{exploring} if it contains at least one unknown cell, i.e., $\exists\tau \in \mathcal{S}$ such that $\Phi(\tau)=U$. Otherwise, it is \textit{explored}.
\end{defn}

\begin{defn}[State Encoding of Node]
\label{stateencode_node}
Let $\mathcal{Q}: \mathcal{N} \rightarrow \{C,E,\bar{E}\}$ assign a symbolic state $\mathcal{Q}(n_i^{\ell})$ to each node $n_i^{\ell} \in \mathcal{N}$. The state $C$ indicates that the residual region $\mathcal{R}(n_i^{\ell})$ is empty, and thus the corresponding coverage task has been completed. In contrast, $E$ and $\bar{E}$ indicate nodes whose residual regions are \textit{explored} and \textit{exploring}, respectively. 
\end{defn}

As shown in Fig.~\ref{fig:tree_example_part1}, since the environment is initially unknown, the tree is initialized with a single root node $n_1^1$, where $\mathcal{A}(n_1^1)=\mathcal{R}(n_1^1)=\mathcal{T}$. The center position $p(n_1^1)$ is computed as the average position of all cells in $\mathcal{A}(n_1^1)$. The symbolic state is set as $\mathcal{Q}(n_1^1)=\bar{E}$. Since the tree contains only the root node, the edge set is initially empty, i.e., $\mathcal{E}=\emptyset$. During online operation, the hierarchical coverage tree is incrementally updated through two steps: 1) node update and 2) tree expansion, as outlined in Alg.~\ref{alg:tree_update}. 

\begin{defn}[Changed Cell Set]
\label{changed_cell_set}
Let $\Delta \mathcal{T} \subseteq \mathcal{T}$ denote the set of cells whose symbolic states change during the latest map update, i.e., $\Delta \mathcal{T}=\{\tau_j \in \mathcal{T}:\Phi_{\mathrm{new}}(\tau_j) \neq \Phi_{\mathrm{old}}(\tau_j)\}$, where $\Phi_{\mathrm{old}}(\tau_j)$ and $\Phi_{\mathrm{new}}(\tau_j)$ are the symbolic maps before and after the latest map update, respectively. 
\end{defn}

\begin{defn}[Leaf Index Map]
\label{leaf_index_map}
Let $\mathcal{N}_{\mathrm{leaf}} \subseteq \mathcal{N}$ be the set of leaf nodes of the tree. Let $\Psi: \mathcal{T} \rightarrow \mathcal{N}_{\mathrm{leaf}} \cup \{\emptyset\}$ be an indexing function that associates each cell $\tau_j \in \mathcal{T}$ with the unique leaf node whose associated subarea contains the cell. Specifically, $\Psi(\tau_j)=n_i^{\ell}$ if $\tau_j \in \mathcal{A}(n_i^{\ell})$ and $n_i^{\ell} \in \mathcal{N}_{\mathrm{leaf}}$; otherwise, $\Psi(\tau_j)=\emptyset$. At tree initialization, all cells are indexed to the root node, i.e., $\Psi(\tau_j)=n_1^1,\forall\tau_j\in\mathcal T$.
\end{defn}

\subsubsection{Node Update} 
\label{node_update}

After each update of the symbolic map $\Phi$, the changed cells are mapped to their corresponding leaf nodes to identify the set of affected leaf nodes $\Delta\mathcal{N}_{\mathrm{leaf}}=\{\Psi(\tau_j):\tau_j \in \Delta\mathcal{T},\Psi(\tau_j) \neq \emptyset\}$ (Alg.~\ref{alg:tree_update}, Lines 1-6). The information of each node $n_i^{\ell} \in \Delta\mathcal{N}_{\mathrm{leaf}}$ is then updated according to the latest cell states (Alg.~\ref{alg:tree_update}, Line 8). Specifically, the residual region $\mathcal{R}(n_i^{\ell})$ is refined by removing obstacles or free-covered cells. The center $p(n_i^{\ell})$ is obtained by averaging the positions of all cells in $\mathcal{R}(n_i^{\ell})$. Due to irregular geometry, the resulting center may fall into obstacles or other regions. In such cases, it is projected to the position of the nearest cell within $\mathcal{R}(n_i^{\ell})$. Finally, the symbolic state $\mathcal{Q}(n_i^{\ell})$ is updated according to Defn.~\ref{stateencode_node}. A node is assigned $C$ if its residual region is empty; otherwise, it is assigned $E$ or $\bar{E}$ depending on whether the residual region is \textit{explored} or \textit{exploring}, respectively.

\subsubsection{Tree Expansion} 
\label{tree_expand}
Following node update, the connectivity of the residual regions of the affected leaf nodes may change, since newly identified obstacle cells and covered free cells have been removed. Therefore, each affected leaf node $n_i^{\ell} \in \Delta\mathcal{N}_{\mathrm{leaf}}$ with a nonempty residual region is examined to determine whether $\mathcal{R}(n_i^{\ell})$ remains path-connected.

For each such node, the process begins by selecting an unlabeled cell from $\mathcal{R}(n_i^{\ell})$ and assigning it a region index $j$. Starting from this cell, a flood-fill search recursively labels all four-connected neighboring cells within $\mathcal{R}(n_i^{\ell})$ using the same index $j$. The search terminates when no further eligible cells remain, and the labeled cells form a maximal path-connected component $\mathcal{S}_j$. This procedure is repeated until every cell in $\mathcal{R}(n_i^{\ell})$ has been labeled, resulting in $\mathcal{R}(n_i^{\ell}) = \bigcup_{j=1}^J \mathcal{S}_j$, where $\mathcal{S}_j \cap \mathcal{S}_{j'} = \emptyset, \;\forall \, j \neq j'\in \{1,\ldots,J\}$ (Alg.~\ref{alg:tree_update}, Line 10).

If $J=1$, $\mathcal{R}(n_i^{\ell})$ remains path-connected and the tree structure associated with $n_i^{\ell}$ remains unchanged. If $J>1$, $\mathcal{R}(n_i^{\ell})$ is separated into several disjoint portions. In this case, $J$ child nodes are created at level $\ell+1$, denoted by $\{n_{I_{\ell+1}+j}^{\ell+1}\}_{j=1}^J$, where $I_{\ell+1}$ is the number of existing nodes at level $\ell+1$ (Alg.~\ref{alg:tree_update}, Line 17). Each portion is assigned to one child node as its associated subarea and residual region, i.e., $\mathcal{A}(n_{I_{\ell+1}+j}^{\ell+1})=\mathcal{R}(n_{I_{\ell+1}+j}^{\ell+1})=\mathcal{S}_j$. The center of each child node is computed by following the same procedure described in the node update step, and its symbolic state is assigned according to Defn.~\ref{stateencode_node}. An edge $(n_i^{\ell},n_{I_{\ell+1}+j}^{\ell+1})$ is then added to $\mathcal{G}$ to connect the parent node $n_i^{\ell}$ to its child node $n_{I_{\ell+1}+j}^{\ell+1}$ (Alg.~\ref{alg:tree_update}, Line 19). The parent node \(n_i^{\ell}\) remains in the tree as an internal node, while its child nodes become new leaf nodes, thereby preserving the hierarchical decomposition history. The leaf index map $\Psi$ is updated accordingly. For each cell $\tau\in\mathcal{A}(n_{I_{\ell+1}+j}^{\ell+1})$, its leaf index is reassigned as $\Psi(\tau)=n_{I_{\ell+1}+j}^{\ell+1}$. Cells in the parent subarea that are not assigned to any child subarea no longer belong to a current leaf node and are assigned $\Psi(\tau)=\emptyset$. 

Fig.~\ref{fig:tree_example_part2} shows an example of tree expansion. The residual region associated with the root node $n_1^1$ is separated into three disconnected components due to newly discovered obstacles and covered cells. Accordingly, $n_1^1$ is expanded into three child nodes $n_1^2$, $n_2^2$, and $n_3^2$. The parent node $n_1^1$ becomes an internal node, while its three child nodes become the new leaves of the tree. As coverage proceeds, Fig.~\ref{fig:tree_example_part3} shows that $n_2^2$ is further expanded into child nodes $n_1^3$ and $n_2^3$. The updated hierarchical coverage tree is subsequently used to incrementally update the global tour, as described in the next subsection.

\subsection{Incremental Global Tour Planning}
\label{global_tour}

As the environment is incrementally revealed and covered, only a small portion of the hierarchical coverage tree is typically updated at each planning step, while most remaining regions stay unchanged. A complete recomputation of the global tour, however, may drastically alter the visiting order of these unchanged regions when several candidate orders have similar travel costs. This can lead to indecisive coverage behavior, unnecessary backtracking, and reduced execution consistency. To address these issues, the proposed algorithm retains the visiting order of unaffected regions in the global tour, while locally updating the affected portion.

Let \(\mathcal{N}_{\mathrm{active}}=\{n_i^{\ell}\in\mathcal{N}_{\mathrm{leaf}}:\mathcal{Q}(n_i^{\ell})\neq C\}\) be the set of active leaf nodes whose coverage has not yet been completed. Let the global tour before the current update be represented as an ordered sequence $\gamma^{-}_{\mathrm{global}}=[v_m]_{m=1}^{M}$, where $v_m$ denotes the $m$-th node in the tour. Each $v_m$ is examined sequentially with respect to the updated hierarchical coverage tree. If $v_m$ remains an active leaf node, its relative position in the tour is retained. If its coverage has been completed, it is removed from the tour. If $v_m$ has been expanded and becomes an internal node, it is replaced by an optimized visiting sequence of its newly generated child nodes. Specifically, consider the $m$-th tour node $v_m=n_i^{\ell}$ that has been expanded into $J$ child nodes $\{n_{i_1}^{\ell+1},\ldots,n_{i_J}^{\ell+1}\}$. Let $n_a$ and $n_b$ denote the predecessor and successor of $v_m$ in the current global tour, respectively, if they exist. A local tour starting from $n_a$, visiting all $J$ child nodes, and ending at $n_b$ is formulated as a Traveling Salesman Problem (TSP) with fixed start and end conditions. 

A complete weighted graph $\hat{\mathcal{G}}=(\hat{\mathcal{N}}, \hat{\mathcal{E}})$ is constructed for the TSP, where $\hat{\mathcal{N}} = \{n_a,n_{i_1}^{\ell+1},\ldots,n_{i_J}^{\ell+1},n_b\}$, and $\hat{\mathcal{E}}$ contains an edge between every pair of distinct nodes in $\hat{\mathcal{N}}$. Let $\mathcal{T}_{\mathrm{tr}} = \{\tau_j\in\mathcal{T}:\Phi(\tau_j)\neq O\}$ denote the cells not currently identified as obstacles. The weight of each edge is defined as the shortest-path distance between the centers of its incident nodes over $\mathcal{T}_{\mathrm{tr}}$, computed using the A* algorithm~\cite{hart1968formal}. The 2-opt method~\cite{aarts2003} is then employed to obtain an approximate solution $\hat{\pi}$ for the TSP. The tour node $v_m$ is replaced by the resulting child-node sequence, yielding $\gamma^{+}_{\mathrm{global}}=[v_1,\ldots,v_{m-1},n^{\ell+1}_{i_{\hat{\pi}(1)}},\ldots,n^{\ell+1}_{i_{\hat{\pi}(J)}},v_{m+1},\ldots,v_M]$. The same procedure is applied to all other expanded nodes in the global tour. In this way, the visiting order outside the locally refined portions is preserved, while the newly generated child nodes are locally arranged to reduce the travel cost. The first element of $\gamma^{+}_{\mathrm{global}}$ is selected as the target node $n_{\mathrm{target}}$ for local coverage planning. An example of global tour and corresponding target node is shown in Figs.~\ref{fig:tree_example_part2} and~\ref{fig:tree_example_part3}.

\RestyleAlgo{ruled}
\LinesNumbered
\begin{algorithm}[t]
\small
\caption{Tree Update (Secs.~\ref{node_update} and~\ref{tree_expand})}
\label{alg:tree_update}
\KwIn{Tree $\mathcal{G}$, symbolic map of tiling $\Phi$, changed cell set $\Delta\mathcal{T}$, leaf index map $\Psi$}
\KwOut{Updated tree $\mathcal{G}$ and leaf index map $\Psi$}

$\Delta\mathcal{N}_{\mathrm{leaf}} \leftarrow \emptyset$\;

\ForEach{$\tau_j \in \Delta\mathcal{T}$}
{
    \If{$\Psi(\tau_j) \neq \emptyset$}
    {
        $\Delta\mathcal{N}_{\mathrm{leaf}}
        \leftarrow
        \Delta\mathcal{N}_{\mathrm{leaf}}
        \cup \{\Psi(\tau_j)\}$\;
    }
}

\ForEach{$n_i^{\ell} \in \Delta\mathcal{N}_{\mathrm{leaf}}$}
{
    $\{\mathcal{R}(n_i^{\ell}),p(n_i^{\ell}),\mathcal{Q}(n_i^{\ell})\}\leftarrow \texttt{updateNodeInfo}(n_i^{\ell},\Phi)$\;
                
    \If{$\mathcal{Q}(n_i^{\ell}) \neq C$}
    {
        $\{\mathcal{S}_j\}_{j=1}^{J}
        \leftarrow
        \texttt{searchDisjointPortion}(\mathcal{R}(n_i^{\ell}))$\;

        \If{$J>1$}
        {
            \ForEach{$\tau_j \in \mathcal{A}(n_i^{\ell})$}
            {
                $\Psi(\tau_j) \leftarrow \emptyset$\;
            }
            $I_{\ell+1}
                \leftarrow
                |\mathcal{N}^{\ell+1}|$\;
            \For{$j \leftarrow 1$ \KwTo $J$}
            {
                $n_{I_{\ell+1}+j}^{\ell+1}
                \leftarrow
                \texttt{createChildNode}(\mathcal{S}_j,\ell+1)$\;
                
                $\mathcal{G}.\texttt{addNode}(n_{I_{\ell+1}+j}^{\ell+1})$\;
                $\mathcal{G}.\texttt{addEdge}(n_i^{\ell},n_{I_{\ell+1}+j}^{\ell+1})$\;

                \ForEach{$\tau \in \mathcal{S}_j$}
                {
                    $\Psi(\tau) \leftarrow n_{I_{\ell+1}+j}^{\ell+1}$\;
                }
            }
        }
    }
}
\end{algorithm}

\subsection{Local Coverage Path Planning}
\label{localCover}

Given the target node $n_{\mathrm{target}}$ selected from the updated global tour, the local planner progressively covers its residual region $\mathcal{R}(n_{\mathrm{target}})$. At each planning step, a target cell is selected to guide the robot's next coverage motion. The target-cell selection strategy is described below.

\begin{rem}
If $\mathcal{R}(n_{\mathrm{target}})$ is not adjacent to the robot's current position, the robot first follows a collision-free path to reach the target region and then proceeds with local coverage.
\end{rem}

\subsubsection{Target Cell Selection Strategy}

Let $\tau_{\mathcal{R}}$ denote the robot's current cell. Let $\mathcal{H}(\tau_{\mathcal{R}})\subseteq\mathcal{T}$ be the local vicinity defined as the $3\times3$ tiling centered at $\tau_{\mathcal{R}}$, as shown in Fig.~\ref{fig:targat_cell_select}. Then, a set $\mathcal{V}(\tau_{\mathcal{R}})\subseteq\mathcal{H}(\tau_{\mathcal{R}})$ is constructed to contain all candidate cells for target selection.

\begin{defn}[Candidate Cell]
A cell $\tau \in \mathcal{H}(\tau_{\mathcal{R}})$ is considered as a candidate if: i) $\tau \in \mathcal{R}(n_{\mathrm{target}})$, ii) $\Phi(\tau)=F^u$, and iii) the straight-line path from $\tau_{\mathcal{R}}$ to $\tau$ is collision-free.
\end{defn}

\begin{defn}[Lap]
A lap is a collision-free straight line passing through a sequence of cells and oriented perpendicular to the sweep direction of the coverage path.
\end{defn}

Next, target cell is selected within $\mathcal{V}(\tau_{\mathcal{R}})$ according to the priority of the left lap, current lap, and right lap. These directions are defined with respect to a fixed sweep direction. When multiple candidates exist on the same lap, the nearest one is selected. This strategy generates a systematic back-and-forth coverage pattern as follows. At any cell, the robot first attempts to move toward a target cell on the left lap and continues covering along the selected lap. When no eligible cell remains on the left lap, the robot proceeds along the current lap and subsequently shifts to the right lap. In this way, the robot performs the back-and-forth coverage lap by lap while shifting laps from left to right as they are covered.

\begin{figure}[t]
    \centering
    \subfloat[Local target cell selection.]{
        \includegraphics[width=0.48\columnwidth]{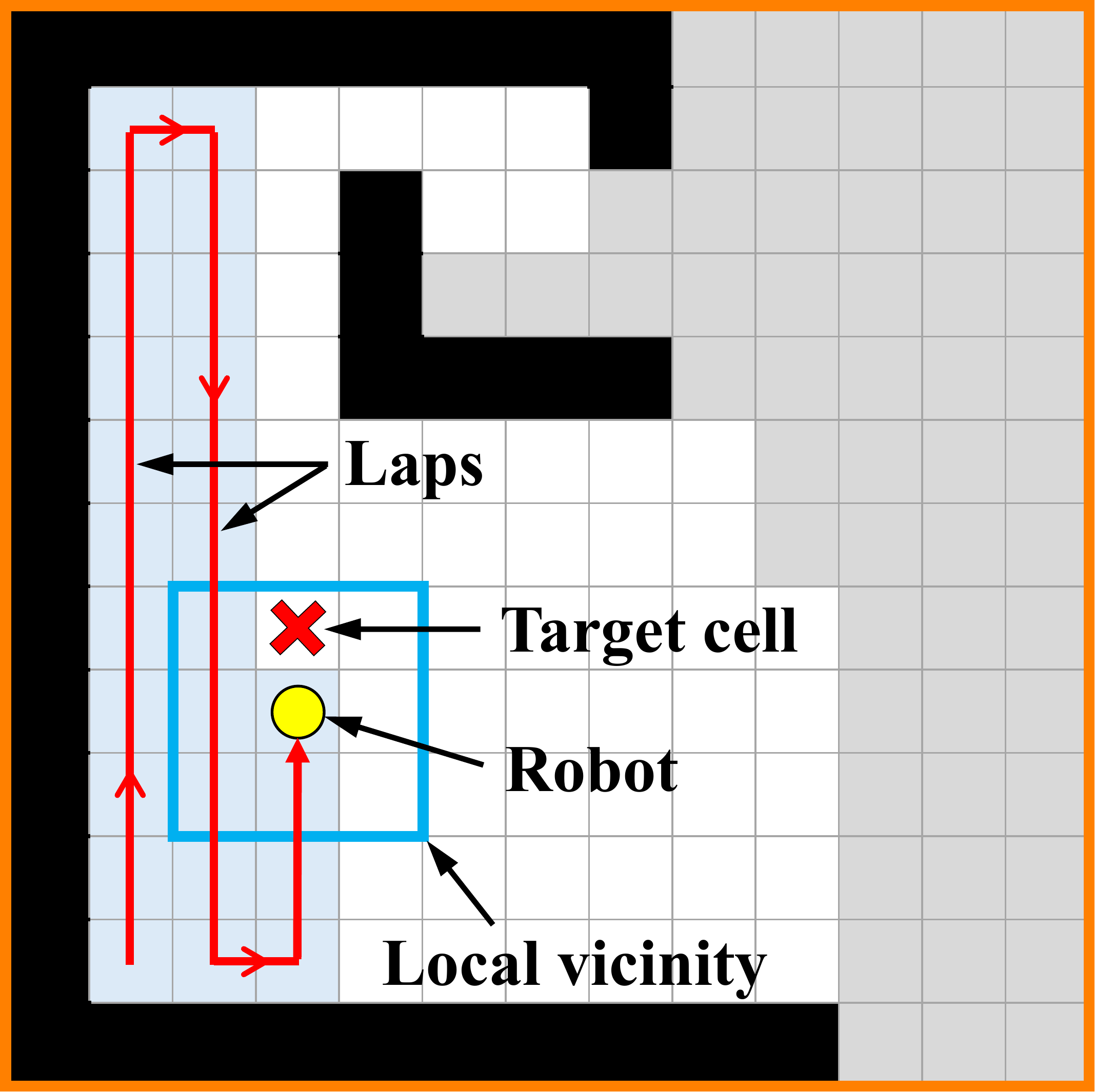}\label{fig:targat_cell_select}}\quad \hspace{-10pt}
    \centering
    \subfloat[Dead-end escape.]{
        \includegraphics[width=0.48\columnwidth]{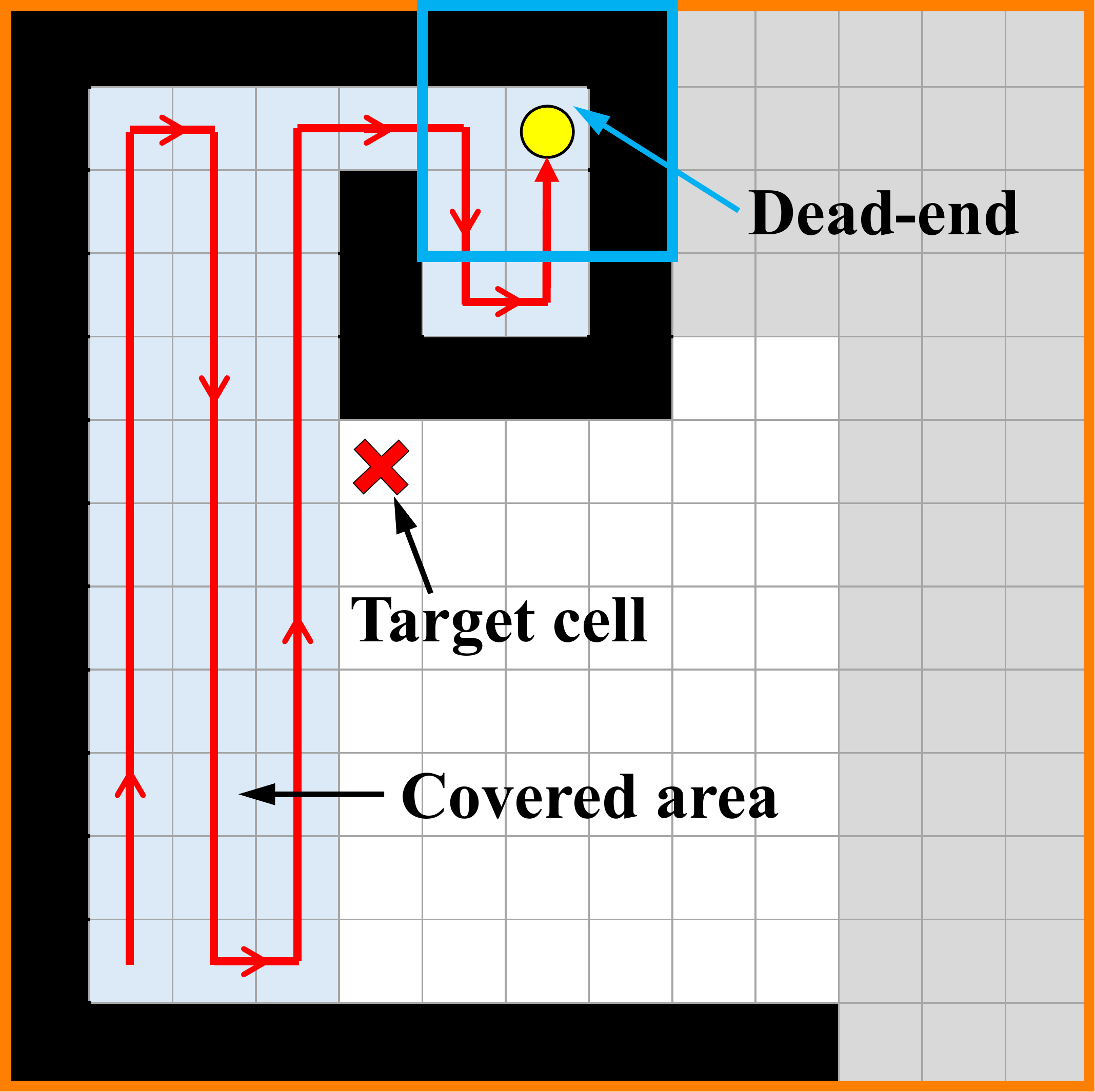}\label{fig:dead_end_escape}}\\
    \caption{Illustration of local target-cell selection and dead-end recovery. (a) The target cell is selected within the local vicinity according to the priority of neighboring laps, generating a back-and-forth coverage pattern. (b) When no candidate target cell is available in the local vicinity, the robot reaches a dead-end. The nearest free-uncovered cell adjacent to the covered area is then selected as the target, allowing the robot to escape the dead-end and resume coverage.}\label{fig:localCover} 
    \vspace{-1.0em}
 \end{figure}

\subsubsection{State Update Strategy}
Once the target cell is selected, the state of the current cell $\tau_{\mathcal R}$ is updated to $\Phi(\tau_{\mathcal R})=F^c$ unless both cells immediately above and below $\tau_{\mathcal R}$ on the current lap remain free and uncovered. In the latter case, $\Phi(\tau_{\mathcal R})$ remains $F^u$ to avoid premature interruption of the lap and preserve the structured back-and-forth coverage pattern.

Following the cell-state update, newly acquired sensor measurements and coverage progress are incorporated into the symbolic map $\Phi$. The residual region $\mathcal R(n_{\mathrm{target}})$ and the node state $\mathcal Q(n_{\mathrm{target}})$ are then updated accordingly. If the node becomes explored, i.e., $\mathcal{Q}(n_{\mathrm{target}})=E$, its remaining residual region contains only free-uncovered cells and the planner switches to the global-tour-aware coverage strategy described below. If $\mathcal{Q}(n_{\mathrm{target}})=C$, coverage of the target node is completed. In addition, if the updated residual region becomes disconnected, the target node is expanded; the global tour is then updated and a new target node is selected.

\subsubsection{Dead-End Recovery Strategy}
During navigation it is possible that the robot reaches a cell where it is unable to find the target cell because it is surrounded by only obstacles and covered regions, as shown in Fig.~\ref{fig:dead_end_escape}. This is known as a dead-end situation defined below.

\begin{defn}[Dead-end]
A dead-end occurs at the robot's current cell $\tau_{\mathcal{R}}$ if $\mathcal{V}(\tau_{\mathcal{R}})=\emptyset$ while $\mathcal{Q}(n_{\mathrm{target}})\neq C$, i.e., no candidate target cell is available although coverage of the target node remains incomplete.
\end{defn}

When a dead-end occurs, directly selecting an arbitrary uncovered cell within $\mathcal R(n_{\mathrm{target}})$ may introduce unnecessary traversal and disrupt the structured sweeping process. The robot therefore searches for the nearest free-uncovered cell adjacent to the covered area, as shown in Fig.~\ref{fig:dead_end_escape}. Then it follows the shortest path to this cell and resumes the target cell selection strategy from that location.

\subsubsection{Global-Tour-Aware Coverage Strategy}

When the target node becomes explored, i.e., $\mathcal{Q}(n_{\mathrm{target}})=E$, all cells in $\mathcal{R}(n_{\mathrm{target}})$ are known to be free and uncovered. A local coverage path is generated by formulating a TSP that starts from the robot's current cell $\tau_{\mathcal R}$ and visits every cell in $\mathcal R(n_{\mathrm{target}})$. To maintain consistency with the global tour, let $n_{\mathrm{next}}$ denote the node following $n_{\mathrm{target}}$, if it exists. The center $p(n_{\mathrm{next}})$ is introduced as a virtual terminal point of the local coverage path. This encourages the path to terminate at a location favorable for transitioning to the next target node, thereby reducing transition cost. If $n_{\mathrm{next}}$ does not exist, the terminal location is left unconstrained and the corresponding open TSP is solved with a fixed start and free end. After the resulting coverage path is executed, all cells in $\mathcal R(n_{\mathrm{target}})$ become covered and the node state is updated to $\mathcal{Q}(n_{\mathrm{target}})=C$.

\section{Algorithm Analysis}
\label{sec:analysis}

This section presents a formal analysis of both computational complexity and theoretical guarantees. 

\begin{lem}
    Hierarchical coverage tree update has a time complexity of $O(|\mathcal{T}|)$. 
    \label{lem:tree_update}
\end{lem}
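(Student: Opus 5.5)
We bound the cost of each phase of Alg.~\ref{alg:tree_update} separately and sum the bounds. The key structural fact is that the subareas of the current leaf nodes are pairwise disjoint subsets of $\mathcal{T}$.

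This disjointness we would establish first, by induction on the number of expansions. Initially the single leaf $n_1^1$ has $\mathcal{A}(n_1^1)=\mathcal{T}$. When a leaf $n_i^{\ell}$ is expanded, its child subareas are the components $\mathcal{S}_j$. These are pairwise disjoint, and each lies inside $\mathcal{R}(n_i^{\ell})\subseteq\mathcal{A}(n_i^{\ell})$. Since $n_i^{\ell}$ is no longer a leaf, the leaf subareas stay disjoint. It follows that $\Psi$ is well defined (Defn.~\ref{leaf_index_map}) and that
\begin{equation*}
\sum_{n\in\Delta\mathcal{N}_{\mathrm{leaf}}}|\mathcal{A}(n)|\le|\mathcal{T}|.
\end{equation*}

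With this in hand, the phases are bounded as follows.
\begin{itemize}
\item Lines 1--6 make one $O(1)$ lookup of $\Psi$ per changed cell. We store $\Delta\mathcal{N}_{\mathrm{leaf}}$ as a set with constant-time insertion, for example via a marker flag on each node. This phase therefore costs $O(|\Delta\mathcal{T}|)=O(|\mathcal{T}|)$.
\item For a node $n_i^{\ell}\in\Delta\mathcal{N}_{\mathrm{leaf}}$, \texttt{updateNodeInfo} scans $\mathcal{A}(n_i^{\ell})$ once to rebuild $\mathcal{R}(n_i^{\ell})$, determine whether it is explored, and average positions. This costs $O(|\mathcal{A}(n_i^{\ell})|)$.
\item The center projection seeks the nearest cell of $\mathcal{R}(n_i^{\ell})$. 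It can be done by a single scan over $\mathcal{R}(n_i^{\ell})$ comparing Euclidean distances to the mean, which costs $O(|\mathcal{A}(n_i^{\ell})|)$. A BFS bounded by the grid would also suffice, at cost $O(|\mathcal{T}|)$ per node, but that would be too loose. We therefore commit to the linear scan over the node's own cells.
\item Flood fill with 4-connectivity labels each cell of $\mathcal{R}(n_i^{\ell})$ once and examines at most four neighbours per cell, so it costs $O(|\mathcal{R}(n_i^{\ell})|)$.
\item If $J>1$, resetting $\Psi$ on $\mathcal{A}(n_i^{\ell})$ is linear in $|\mathcal{A}(n_i^{\ell})|$. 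Creating the children and their edges costs $O(J)$ with $J\le|\mathcal{R}(n_i^{\ell})|$. Computing each child's centre and state, and reassigning $\Psi$, is linear in $\sum_j|\mathcal{S}_j|=|\mathcal{R}(n_i^{\ell})|$.
\end{itemize}
Hence each affected node costs $O(|\mathcal{A}(n_i^{\ell})|)$.

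Summing over $\Delta\mathcal{N}_{\mathrm{leaf}}$ and using the disjointness bound gives $O(|\mathcal{T}|)$ for the loop. Adding the first phase, the total is $O(|\mathcal{T}|)$.

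The main obstacle is the per-node accounting, not the sum. Every subroutine, in particular the centre projection and child initialisation, must run in time linear in the node's own subarea rather than in $|\mathcal{T}|$. Otherwise the bound degrades to $O(|\Delta\mathcal{N}_{\mathrm{leaf}}|\cdot|\mathcal{T}|)$. We would state explicitly the implementation assumptions this requires: adjacency lookup on the grid and the map $\Psi$ are $O(1)$, and node storage supports $O(1)$ insertion. We would also note that a node becomes a leaf only at creation, so no cell is counted twice within one update.
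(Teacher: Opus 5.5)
Your proposal is correct and follows essentially the same route as the paper: you bound the $\Psi$-lookup phase by $O(|\Delta\mathcal{T}|)$, charge each affected leaf a cost linear in its own subarea (node update, flood fill, child creation and re-indexing), and sum using the pairwise disjointness of leaf subareas to obtain $O(|\mathcal{T}|)$. Your inductive proof of that disjointness and your explicit linear-time handling of the centre projection fill in details the paper only asserts, but they do not change the argument.
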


\begin{proof}  
As described in Alg.~\ref{alg:tree_update}, the affected leaf-node set $\Delta\mathcal{N}_{\mathrm{leaf}}$ is first identified by mapping each changed cell $\tau_j\in\Delta\mathcal{T}$ to its corresponding leaf node through the leaf index map $\Psi$. Since each lookup of $\Psi(\tau_j)$ takes constant time, this step requires $O(|\Delta\mathcal{T}|)$. Consider an affected leaf node $n_i^\ell\in\Delta\mathcal{N}_{\mathrm{leaf}}$. Updating the node information requires at most linear time in the size of its residual region, with complexity $O(|\mathcal{R}(n_i^{\ell})|)$. The subsequent flood-fill search visits each cell in $\mathcal{R}(n_i^\ell)$ at most once and therefore requires $O(|\mathcal{R}(n_i^{\ell})|)$. If the residual region remains path-connected, no modification of the tree is required. If it is divided into multiple disjoint portions, new child nodes are created and the leaf index map $\Psi$ is updated. This operation requires at most one traversal of the associated subarea and the identified components, giving a complexity of $O(|\mathcal{A}(n_i^{\ell})|+|\mathcal{R}(n_i^{\ell})|)$. Therefore, the total complexity of the tree update is $O(|\Delta\mathcal{T}|+\sum_{n_i^\ell\in\Delta\mathcal{N}_{\mathrm{leaf}}}(|\mathcal{A}(n_i^{\ell})|+|\mathcal{R}(n_i^{\ell})|))$. Since $\mathcal{R}(n_i^{\ell}) \subseteq \mathcal{A}(n_i^{\ell})$, this simplifies to $O(|\Delta\mathcal{T}|+\sum_{n_i^\ell\in\Delta\mathcal{N}_{\mathrm{leaf}}}|\mathcal{A}(n_i^{\ell})|)$. Moreover, the associated subareas of distinct leaf nodes are mutually disjoint, such that $\sum_{n_i^\ell\in\Delta\mathcal{N}_{\mathrm{leaf}}}|\mathcal{A}(n_i^{\ell})| \leq |\mathcal{T}|$. Since $\Delta \mathcal{T} \subseteq \mathcal{T}$, the overall complexity of the tree update is $O(\mathcal{T})$.
\end{proof}

\begin{lem}
The incremental global tour planning has a time complexity of
$O(M+\sum_{h=1}^{H}J_h^2)$.
\label{lem:global_tour}
\end{lem}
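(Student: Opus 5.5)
The plan is to mirror the accounting used for Lemma~\ref{lem:tree_update}. I will split the incremental global tour update into two phases: a sequential scan of the previous tour $\gamma^{-}_{\mathrm{global}}=[v_m]_{m=1}^{M}$, and the local TSP refinement for each expanded tour node. Throughout, $H$ denotes the number of tour nodes expanded in the current update, and $J_h$ the number of child nodes generated by the $h$-th one. As in the paper's setting, I treat the edge weights between node centers as precomputed or available in constant time. The A* cost for the pairwise distances will be noted as a separate, map-dependent term outside the stated bound, or assumed amortized by caching the distances. This assumption is the point I expect to need the most care.

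\emph{Scan phase.} For each $v_m$, deciding whether it is still an active leaf, has state $C$, or has become internal takes $O(1)$. This uses the node's stored state $\mathcal{Q}$ and whether it has children in $\mathcal{G}$. Retaining or deleting $v_m$ is also $O(1)$ if the tour is stored as a linked list; with an array, one rebuilds the output sequence in a single pass. Identifying the predecessor $n_a$ and successor $n_b$ of an expanded node is likewise $O(1)$ during the scan. This phase therefore costs $O(M)$, excluding the splicing of child sequences, which I charge to the refinement phase.

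\emph{Refinement phase.} For the $h$-th expanded node, the complete graph $\hat{\mathcal{G}}$ has $J_h+2$ nodes and $O(J_h^2)$ edges, so filling in the weight matrix costs $O(J_h^2)$. The 2-opt solver is then applied. The step I expect to be the main obstacle is bounding the 2-opt cost: one pass over all candidate edge-pair exchanges is $O(J_h^2)$, but the number of improving passes is not bounded in general. I would therefore state the bound under the standard implementation convention of a bounded number of improvement passes, or a single pass of 2-opt, giving $O(J_h^2)$ per expanded node. Splicing the resulting child sequence $[n^{\ell+1}_{i_{\hat\pi(1)}},\ldots,n^{\ell+1}_{i_{\hat\pi(J_h)}}]$ into the tour costs $O(J_h)\subseteq O(J_h^2)$. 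Selecting $n_{\mathrm{target}}$ as the first element is $O(1)$.

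Summing the two phases gives $O(M+\sum_{h=1}^{H}J_h^2)$. As a final remark, since the children of distinct expanded nodes are distinct leaves, $\sum_h J_h \le |\mathcal{N}_{\mathrm{leaf}}|$. This shows that the refined tour length stays linear in the number of active leaves, consistent with the bound.
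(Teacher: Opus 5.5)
Your proof is correct and follows the paper's argument: an $O(M)$ scan of the previous tour, plus an $O((J_h+2)^2)=O(J_h^2)$ 2-opt refinement for each expanded node, summed over $h$. The paper makes your two caveats only implicitly by citing the 2-opt complexity: that the A* edge-weight computations are excluded, and that 2-opt is run for a bounded number of $O(n^2)$ passes. Stating them explicitly sharpens the argument and does not change the approach.
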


\begin{proof}
Let $M$ denote the number of nodes in the current global tour, $H$ the number of expanded tour nodes, and $J_h$ the number of child nodes generated by the $h$-th expanded node. Examining all $M$ tour nodes against the updated tree requires $O(M)$ time. For the $h$-th expanded node, the corresponding local TSP contains at most $J_h+2$ nodes, including its $J_h$ child nodes and the predecessor and successor, when available. The 2-opt method has complexity of $O((J_h+2)^2)=O(J_h^2)$~\cite{shen2022ct}. Therefore, updating all expanded nodes requires $O(\sum_{h=1}^{H}J_h^2)$ time, yielding an overall complexity of $O(M+\sum_{h=1}^{H}J_h^2)$.
\end{proof}

\begin{lem}
The local coverage path planning has a time complexity of
$O(|\mathcal{R}(n_{\mathrm{target}})|^2)$.
\label{lem:localCover}
\end{lem}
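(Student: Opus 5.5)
The plan is to bound each of the three components of the local planner in terms of $R:=|\mathcal{R}(n_{\mathrm{target}})|$ and take the maximum: target cell selection with state update, dead-end recovery, and the global-tour-aware coverage strategy. This mirrors the structure of the proofs of Lemmas~\ref{lem:tree_update} and~\ref{lem:global_tour}. For the local computations, the tree and node updates triggered within the target node are already accounted for by Lemma~\ref{lem:tree_update}, so they are not recounted here.

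\textbf{Target cell selection and state update.} The candidate set $\mathcal{V}(\tau_{\mathcal{R}})$ is built from the fixed $3\times3$ vicinity $\mathcal{H}(\tau_{\mathcal{R}})$. Each of its at most nine cells needs a constant-time check: membership in $\mathcal{R}(n_{\mathrm{target}})$ via $\Psi$, the value of $\Phi$, and a straight-line collision check of bounded length. Choosing the highest-priority lap and the nearest candidate on it is also over a constant-size set. The state update of $\tau_{\mathcal{R}}$ inspects only two neighboring cells. Hence this step costs $O(1)$.

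\textbf{Dead-end recovery.} I will argue that the search for the nearest free-uncovered cell adjacent to the covered area, and the shortest path to it, can be carried out by one breadth-first (or Dijkstra/A*) search on the four-connected grid graph. Since the grid has bounded degree, this search is linear in the number of cells it explores. The point to justify is that the explored set can be charged to $O(R)$. The search is restricted to $\mathcal{R}(n_{\mathrm{target}})$ together with the covered cells bounding it, and the covered cells adjacent to the region are at most a constant multiple of $R$ because each residual cell has at most four neighbors. That gives $O(R)$, or $O(R\log R)$ with a priority queue, either of which lies within $O(R^2)$. This is the step I expect to be the main obstacle. The robot may have to travel through covered cells outside the region, so the charge to $R$ needs care. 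If it fails, I would follow the convention used for the A* edge weights in Sec.~\ref{global_tour} and treat path retrieval as outside the planning cost being measured.

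\textbf{Global-tour-aware coverage.} Once $\mathcal{Q}(n_{\mathrm{target}})=E$, the TSP has $R+1$ or $R+2$ nodes: the robot cell, the residual cells, and possibly the virtual terminal $p(n_{\mathrm{next}})$. Invoking the same 2-opt complexity result~\cite{shen2022ct} used in Lemma~\ref{lem:global_tour}, this costs $O((R+2)^2)=O(R^2)$. Edge costs between cells of an explored grid region can be taken as the grid distances the 2-opt routine evaluates on demand, so no separate all-pairs precomputation is charged. The fixed start and the free or fixed end do not change the node count asymptotically. Summing the three bounds gives $O(1)+O(R)+O(R^2)=O(|\mathcal{R}(n_{\mathrm{target}})|^2)$, which proves the statement.
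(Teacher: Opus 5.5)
Your three-part decomposition matches the paper's proof: $O(1)$ for the $3\times3$ vicinity check, $O(|\mathcal{R}(n_{\mathrm{target}})|)$ for the dead-end search, and the cited 2-opt bound $O(|\mathcal{R}(n_{\mathrm{target}})|^2)$, which dominates. The step that fails is your main argument for dead-end recovery, as you suspected. A search that has to return a path from $\tau_{\mathcal{R}}$ cannot be confined to $\mathcal{R}(n_{\mathrm{target}})$ plus the covered cells bordering it. At a dead-end, $\tau_{\mathcal{R}}$ need not lie in or next to that set. For example, the robot may sit at the end of a long covered corridor, or $\mathcal{A}(n_{\mathrm{target}})$ may be almost entirely covered with only a few residual cells left far away. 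A BFS from $\tau_{\mathcal{R}}$ then explores a number of covered cells proportional to $|\mathcal{A}(n_{\mathrm{target}})|$, or even to $|\mathcal{T}|$, none of them adjacent to the residual region. Your $O(|\mathcal{R}(n_{\mathrm{target}})|)$ bound on bordering covered cells is true, but it does not control this cost.

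Your fallback is the right repair, but ``treat path retrieval as outside the cost'' is not enough by itself. If the nearest eligible cell is still \emph{identified} by that BFS, excluding the path saves nothing. Separate identification from retrieval, as the paper implicitly does:
\begin{itemize}
\item The dead-end target is an $F^u$ cell of $\mathcal{R}(n_{\mathrm{target}})$ with a covered neighbor.
\item One pass over $\mathcal{R}(n_{\mathrm{target}})$, with $O(1)$ neighbor checks in $\Phi$ and an $O(1)$ distance comparison (e.g., Euclidean), finds it in $O(|\mathcal{R}(n_{\mathrm{target}})|)$.
\item The shortest path to it is an A* query. Like the edge weights in Sec.~\ref{global_tour}, it is not charged to this lemma.
\end{itemize}
If ``nearest'' were meant in path distance, neither your argument nor the paper's would give a linear bound. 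With this change your proof coincides with the paper's.

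The same caveat applies to your on-demand TSP edge costs. They are $O(1)$ only for a closed-form metric, not for obstacle-aware grid distances. That is the unstated convention under which the paper applies the 2-opt bound, so you should state it as an assumption rather than as a fact.
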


\begin{proof}
By default, target cell is selected within a $3\times3$ local vicinity. Since at most eight neighboring cells are examined, this operation requires $O(1)$ time. When a dead-end occurs, searching for the target cell requires at most $O(|\mathcal R(n_{\mathrm{target}})|)$ time. For an \textit{explored} target node, the remaining coverage problem is formulated as a TSP over at most $|\mathcal R(n_{\mathrm{target}})|$ cells. The 2-opt method therefore requires $O(|\mathcal R(n_{\mathrm{target}})|^2)$ time~\cite{shen2022ct}. Since the TSP-based planning dominates the other steps, the overall time complexity is $O(|\mathcal R(n_{\mathrm{target}})|^2)$.
\end{proof}

\begin{thm}
The proposed algorithm has a time complexity of
$O(|\mathcal{T}|^2)$ for each planning and update cycle.
\label{thm:overall_complexity}
\end{thm}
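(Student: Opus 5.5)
The plan is to decompose one planning and update cycle into its three constituent stages and bound each using the preceding lemmas. The stages are the hierarchical coverage tree update (Lemma~\ref{lem:tree_update}), the incremental global tour planning (Lemma~\ref{lem:global_tour}) and the local coverage path planning (Lemma~\ref{lem:localCover}). The symbolic map update, including the morphological closing, is a constant-size-kernel operation over the tiling and so costs $O(|\mathcal{T}|)$. Summing gives a per-cycle cost of
\[
O\Big(|\mathcal{T}| + M + \textstyle\sum_{h=1}^{H} J_h^2 + |\mathcal{R}(n_{\mathrm{target}})|^2\Big),
\]
and the task reduces to showing that every term is $O(|\mathcal{T}|^2)$.

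The tree update term is already $O(|\mathcal{T}|)$. For the local planning term, $\mathcal{R}(n_{\mathrm{target}})\subseteq\mathcal{A}(n_{\mathrm{target}})\subseteq\mathcal{T}$, so $|\mathcal{R}(n_{\mathrm{target}})|^2\le|\mathcal{T}|^2$.

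For the global tour term, I would argue that every tour node is an active leaf node. The subareas of distinct leaves are mutually disjoint (the fact used in the proof of Lemma~\ref{lem:tree_update}), and each active leaf has a nonempty residual region contained in its subarea. Hence $M\le|\mathcal{N}_{\mathrm{leaf}}|\le|\mathcal{T}|$ after discarding completed nodes. Similarly, the child nodes produced by all $H$ expansions in one cycle have pairwise disjoint nonempty subareas $\mathcal{S}_j\subseteq\mathcal{T}$, so $\sum_h J_h\le|\mathcal{T}|$. By convexity, or simply $J_h^2\le J_h\cdot\sum_{h'}J_{h'}$, this gives $\sum_h J_h^2\le(\sum_h J_h)^2\le|\mathcal{T}|^2$.

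The only delicate point is the bound on $M$ and $\sum_h J_h$, which needs the disjointness and nonemptiness of leaf subareas. For the tour nodes before the update, $\gamma^-_{\mathrm{global}}$ also consisted of leaves at the previous step, which were likewise disjoint, so $M\le|\mathcal{T}|$ holds there too. A secondary subtlety is the A* edge weights in the local TSPs, which Lemma~\ref{lem:global_tour} does not count. If needed, I would note that these at most $O(J_h^2)$ queries each cost $O(|\mathcal{T}|\log|\mathcal{T}|)$. I would treat them as precomputed or bounded consistently with the lemma's accounting rather than reopen it, and rely on the stated lemmas as given. Combining the three bounds yields $O(|\mathcal{T}|^2)$, dominated by the TSP-based local and global refinement steps.
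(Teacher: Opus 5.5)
Your proposal is correct and is essentially the paper's proof: you sum the bounds of Lemmas~\ref{lem:tree_update}--\ref{lem:localCover}, use $M\le|\mathcal{T}|$ and $|\mathcal{R}(n_{\mathrm{target}})|\le|\mathcal{T}|$, and use disjointness of the newly created child subareas to get $\sum_{h}J_h\le|\mathcal{T}|$ and hence $\sum_{h}J_h^2\le(\sum_{h}J_h)^2\le|\mathcal{T}|^2$. Your explicit bound $|\mathcal{R}(n_{\mathrm{target}})|^2\le|\mathcal{T}|^2$ is more careful than the paper's intermediate step, which absorbs that term into $O(|\mathcal{T}|+\sum_h J_h^2)$; you are also right to leave the A* edge weights outside the accounting, as the paper does, because charging $O(|\mathcal{T}|\log|\mathcal{T}|)$ per pairwise query would only give $O(|\mathcal{T}|^3\log|\mathcal{T}|)$.
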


\begin{proof}
From Lemmas~\ref{lem:tree_update}-\ref{lem:localCover}, the proposed algorithm has a complexity of
$O\left(|\mathcal{T}|+M+\sum_{h=1}^{H}J_h^2+|\mathcal{R}(n_{\mathrm{target}})|^2\right)$. Since $M\leq|\mathcal{T}|$ and $|\mathcal{R}(n_{\mathrm{target}})|\leq|\mathcal{T}|$, the complexity simplifies to $O(|\mathcal{T}|+\sum_{h=1}^{H}J_h^2)$. Moreover, the child nodes generated from different expanded leaf nodes correspond to mutually disjoint portions of the tiling, yielding
$\sum_{h=1}^{H}J_h\leq|\mathcal{T}|$ and thus
$\sum_{h=1}^{H}J_h^2\leq\left(\sum_{h=1}^{H}J_h\right)^2
\leq|\mathcal{T}|^2$. Therefore, the time complexity of the proposed algorithm is $O(|\mathcal{T}|^2)$.
\end{proof}

\begin{lem}
For a finite tiling $\mathcal T$, the hierarchical coverage tree undergoes at most $|\mathcal T|-1$ expansion events.
\label{lem:finite_expand}
\end{lem}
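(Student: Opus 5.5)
The bound follows from a counting (potential) argument on the leaf nodes: every expansion event raises the number of leaves by at least one, while the number of leaves can never exceed $|\mathcal T|$. Since the tree starts with exactly one leaf (the root $n_1^1$), at most $|\mathcal T|-1$ expansions can occur.

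\textbf{Step 1: leaf subareas are disjoint and nonempty.} We will show by induction over tree updates that the associated subareas of the current leaf nodes are pairwise disjoint subsets of $\mathcal T$, and that each is nonempty. Initially there is a single leaf with $\mathcal{A}(n_1^1)=\mathcal T$. When a leaf $n_i^\ell$ is expanded, it is replaced in the leaf set by $J$ children whose subareas are the components $\mathcal S_1,\ldots,\mathcal S_J$. These components are pairwise disjoint, nonempty, and contained in $\mathcal{R}(n_i^\ell)\subseteq\mathcal{A}(n_i^\ell)$. Because the subareas of the other leaves are fixed once assigned and were already disjoint from $\mathcal{A}(n_i^\ell)$, the invariant is preserved. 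This is the same disjointness fact used in the proof of Lemma~\ref{lem:tree_update}. It follows that the number of leaves satisfies $|\mathcal N_{\mathrm{leaf}}|\le|\mathcal T|$ at all times.

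\textbf{Step 2: each expansion adds at least one leaf.} An expansion happens only when $J>1$, that is, $J\ge2$. It removes one leaf and adds $J$ leaves, so $|\mathcal N_{\mathrm{leaf}}|$ increases by $J-1\ge1$. Leaves are never deleted otherwise: nodes in state $C$ remain in the tree as leaves, and they still carry their nonempty $\mathcal A$, so they continue to be counted correctly in Step 1. Hence after $k$ expansion events the tree has $|\mathcal N_{\mathrm{leaf}}|\ge 1+k$ leaves. Combining with Step 1 gives $1+k\le|\mathcal T|$, i.e.\ $k\le|\mathcal T|-1$.

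\textbf{Main obstacle.} The step needing the most care is the bookkeeping in Step 1. One must check that a completed leaf, or one whose residual region has shrunk, never loses its subarea. This holds because $\mathcal A$ is fixed at creation, while only $\mathcal R$ shrinks. One must also check that child subareas are drawn from the current residual region, so they cannot overlap another leaf's subarea. Once the invariant is in place, the rest is the counting above.
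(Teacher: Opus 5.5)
Your proof is correct and uses the same leaf-counting argument as the paper: each expansion raises the number of leaves by $J-1\ge 1$, and because leaf subareas are disjoint and nonempty, the number of leaves can never exceed $|\mathcal T|$. Your inductive check of the disjointness invariant, and your note that completed leaves stay in the tree with fixed $\mathcal A$, spell out what the paper states in one line.
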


\begin{proof}
The tree is initialized with a single node. A tree expansion occurs only when the residual region of a leaf node is divided into $J>1$ disjoint components. The expanded node then becomes an internal node and is replaced by $J$ child leaf nodes. Therefore, each expansion increases the number of leaf nodes by $J-1\geq1$. Since the associated subareas of distinct leaf nodes are mutually disjoint and each leaf node contains at least one cell, the total number of leaf nodes is bounded by $|\mathcal T|$. Starting from a single leaf node, the number of tree-expansion events is therefore at most $|\mathcal T|-1$.
\end{proof}

\begin{thm}[Complete Coverage]
\label{thm:completeness}
The proposed algorithm achieves complete coverage of the obstacle-free space.
\end{thm}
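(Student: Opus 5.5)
The plan is a potential-function argument. The potential is the number of cells in the union of residual regions of active leaf nodes, together with the number of leaf nodes. I will show that the algorithm terminates in finitely many steps. I will also show that at termination no obstacle-free cell remains in state $F^u$ or $U$ inside any residual region, so every cell of $\mathcal{T}_f$ lies in some $\mathcal{C}(\tau(k))$.

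\textbf{Step 1 (invariant).} First, establish that at all times the residual regions of the active leaf nodes partition the set $\{\tau:\Phi(\tau)\in\{U,F^u\}\}$ restricted to cells reachable from the robot.

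At initialization this holds trivially, since $\mathcal{R}(n_1^1)=\mathcal{T}$. Node update only removes cells that became $O$ or $F^c$. Tree expansion (Alg.~\ref{alg:tree_update}) replaces $\mathcal{R}(n_i^\ell)$ by the flood-fill components $\mathcal{S}_1,\ldots,\mathcal{S}_J$, whose union is exactly $\mathcal{R}(n_i^\ell)$. So no uncovered cell is ever dropped from the tree. The only exception is a cell that is genuinely disconnected from $\mathcal{T}_f$; this cannot happen for a free cell, because $\mathcal{T}_f$ is path-connected.

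Similarly, the tour update keeps every active leaf node in $\gamma_{\mathrm{global}}$. Retained nodes stay, expanded nodes are replaced by all $J$ children through the TSP solution $\hat\pi$, and only completed nodes ($\mathcal{Q}=C$) are removed. Hence $\gamma_{\mathrm{global}}$ is empty if and only if $\mathcal{N}_{\mathrm{active}}=\emptyset$.

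\textbf{Step 2 (progress on the target).} Next, show that while a fixed $n_{\mathrm{target}}$ is being served, each local planning cycle makes progress in finite time. Progress means either the current cell becomes $F^c$, or an unknown cell becomes known, or a dead-end escape reaches a new $F^u$ cell.

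The candidate-selection rule moves the robot to an $F^u$ cell of $\mathcal{R}(n_{\mathrm{target}})$. At a dead-end, the nearest free-uncovered cell adjacent to the covered area exists and is reachable through $\mathcal{T}_{\mathrm{tr}}$. This follows from path-connectivity of $\mathcal{R}(n_{\mathrm{target}})$, which the tree expansion maintains, together with the Remark on reaching the region. Once $\mathcal{Q}(n_{\mathrm{target}})=E$, the global-tour-aware TSP path visits every cell of $\mathcal{R}(n_{\mathrm{target}})$, after which $\mathcal{Q}(n_{\mathrm{target}})=C$.

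The delicate point is the state-update rule that keeps $\tau_{\mathcal R}$ at $F^u$ when both lap neighbours are uncovered. I would handle it by noting that the robot then moves to a neighbouring $F^u$ cell. The lap-priority ordering (left, current, right) means the robot cannot cycle indefinitely without marking a cell $F^c$, because a lap is finite and is traversed monotonically.

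\textbf{Step 3 (termination).} Each cell changes state at most twice ($U\to F^u\to F^c$ or $U\to O$). Therefore the number of map changes is finite. By Lemma~\ref{lem:finite_expand}, there are at most $|\mathcal T|-1$ expansion events. Combined with Step 2, the pair (number of cells in state $U$ or $F^u$, number of expansions left) decreases lexicographically within finitely many robot steps. So after finite $K$, every active leaf is in state $C$ and the tour is empty.

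By the invariant of Step 1, every cell of $\mathcal{T}_f$ is then $F^c$, i.e.\ covered by some $\mathcal{C}(\tau(k))$. This is Defn.~\ref{def:complete_coverage}.

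\textbf{Main obstacle.} I expect the hardest part to be Step 2's exclusion of livelock: the lap-preserving $F^u$ rule, combined with repeated target switches caused by expansion, could seemingly make the robot oscillate. I would first try a secondary potential, namely the index of the current lap within the fixed sweep direction, which is nondecreasing between dead-end escapes. I would also argue that target switches happen only at the finitely many expansion or completion events, so between them the local argument applies.
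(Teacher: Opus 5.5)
Your proposal follows the same route as the paper's proof: expansion replaces a residual region by flood-fill components whose union is that region, so no uncovered cell is lost; every active leaf stays in the global tour; the target is worked on until it is explored, completed, or expanded; and finiteness of $\mathcal{T}$ together with Lemma~\ref{lem:finite_expand} forces $\mathcal{N}_{\mathrm{active}}=\emptyset$. The paper simply asserts the local-progress step that you single out as the main obstacle, so your potential-function bookkeeping is extra care rather than a different argument; if you keep it, note that under the left-lap-first priority the lap index is not nondecreasing between dead-end escapes, so that step is better justified by an explicit direction-persistence tie-breaking assumption.
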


\begin{proof}
According to Lemma~\ref{lem:finite_expand}, the hierarchical coverage tree undergoes only finitely many expansion events. Whenever a leaf node is expanded, its residual region is partitioned into disjoint child regions whose union equals the residual region of the parent. Hence, tree expansion does not discard any uncovered region. After each tree update, $\mathcal{N}_{\mathrm{active}}$ contains all leaf nodes whose coverage has not yet been completed, and these nodes are maintained in the global tour. During local coverage, the selected target node progressively explores or covers its residual region until it becomes explored, is completed, or is expanded; in the latter case, its residual region is inherited by its child nodes and remains in the global tour. Since $\mathcal{T}$ is finite and tree expansion can occur only finitely many times, repeated application of the above process eventually eliminates all active leaf nodes, i.e., $\mathcal{N}_{\mathrm{active}}=\emptyset$. Consequently, no obstacle-free cell remains uncovered, and complete coverage of the obstacle-free space is achieved.
\end{proof}

\begin{figure}[t]
    \centering
    \subfloat[Office.]{
        \includegraphics[width=0.48\columnwidth]{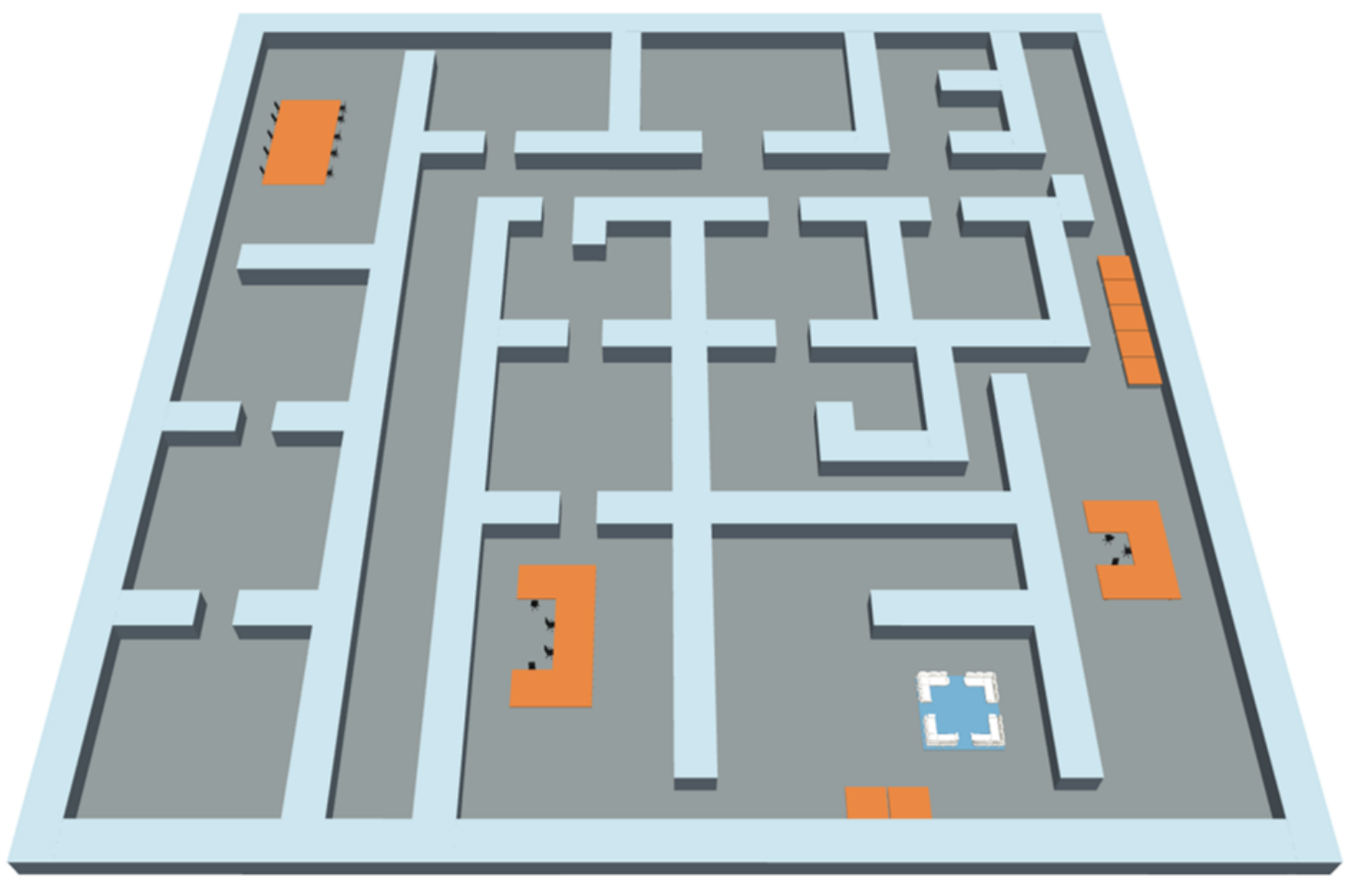}\label{fig:office}}\quad \hspace{-10pt}
    \centering
    \subfloat[Warehouse.]{
        \includegraphics[width=0.48\columnwidth]{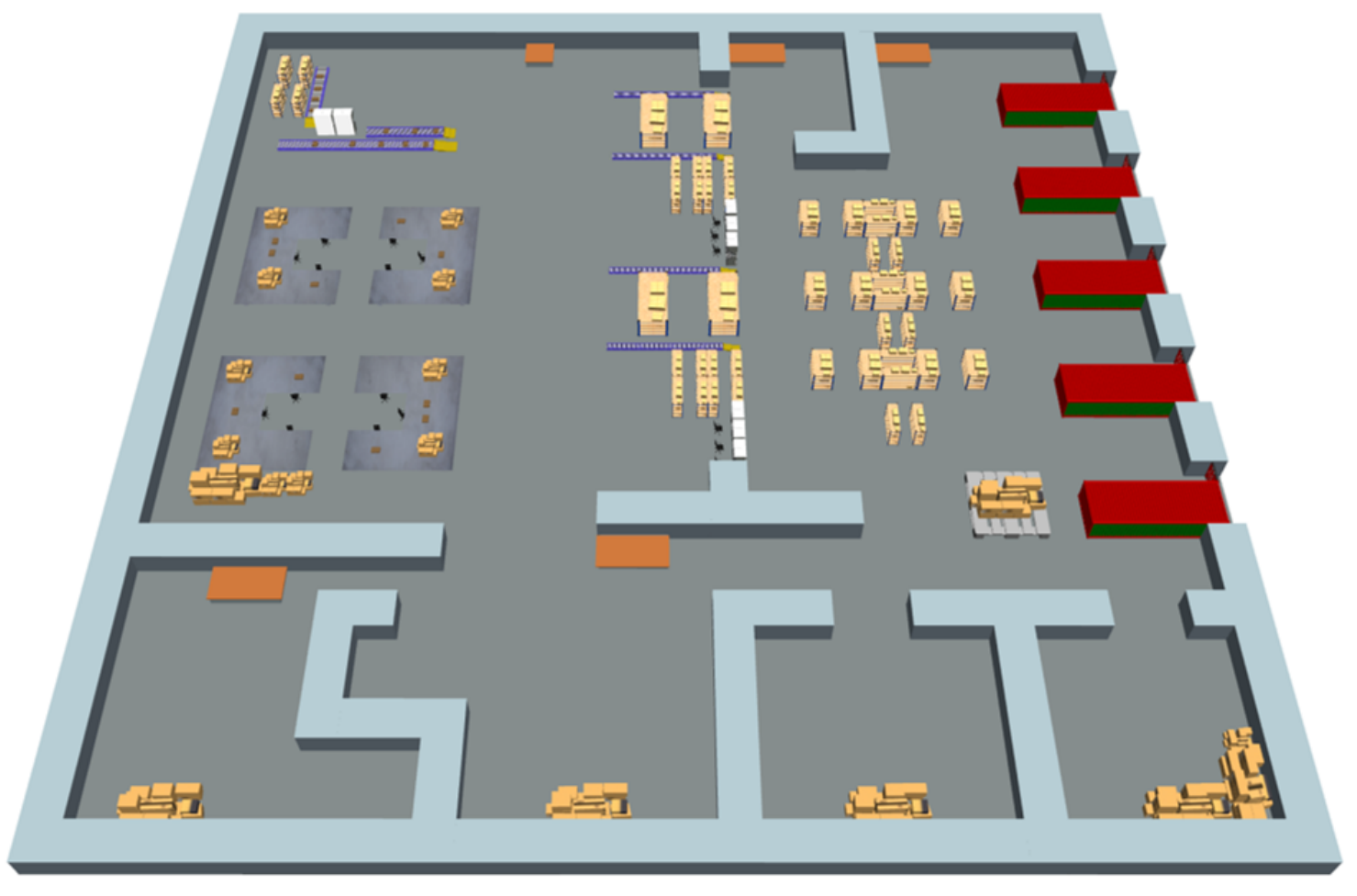}\label{fig:warehouse}}\vspace{0.5em}\\
        \centering
    \subfloat[Forest.]{
        \includegraphics[width=0.48\columnwidth]{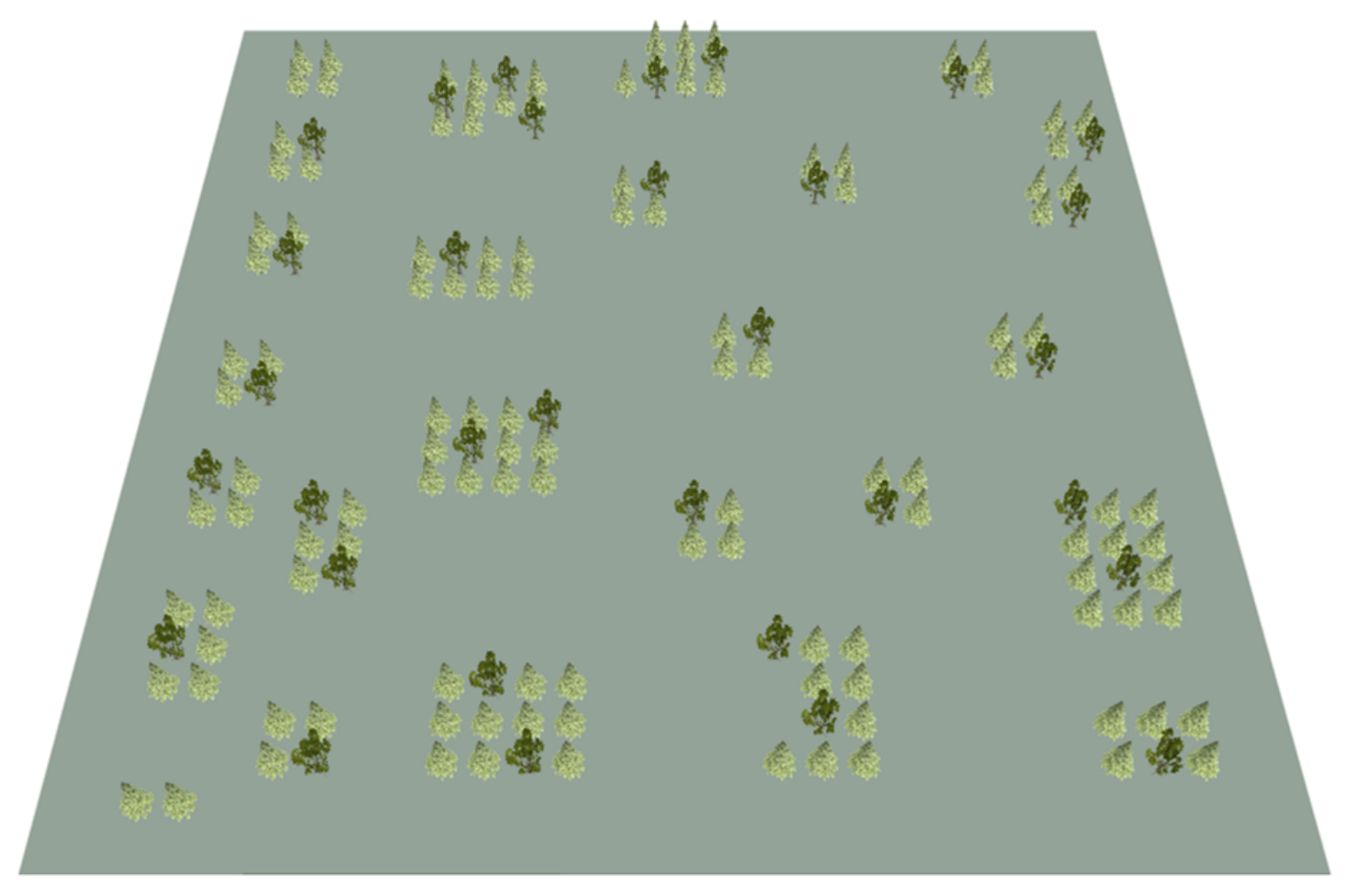}\label{fig:forest}}\quad \hspace{-10pt}
    \centering
    \subfloat[Mall 1.]{
        \includegraphics[width=0.48\columnwidth]{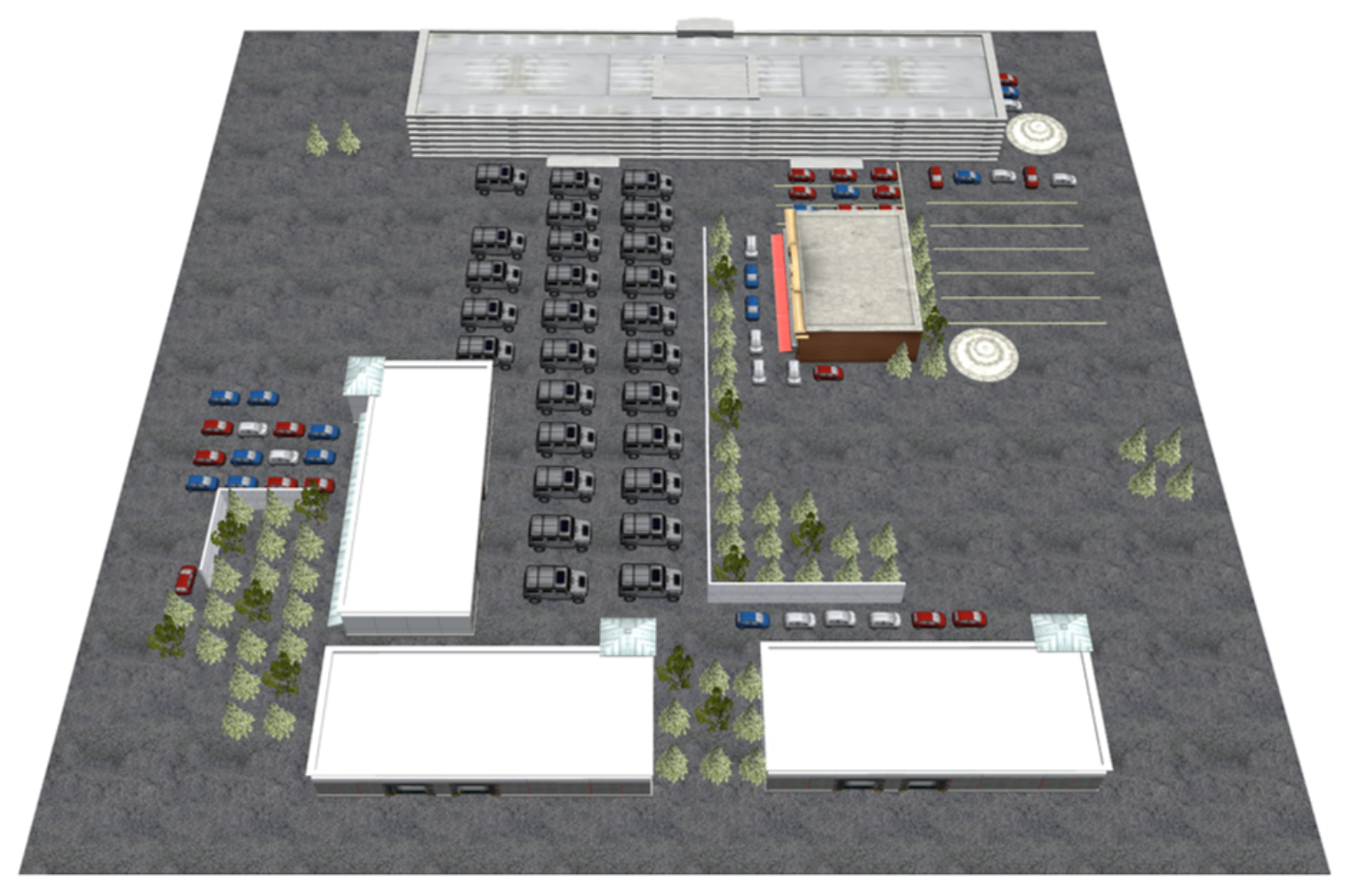}\label{fig:mall1}}\vspace{0.5em}\\
        \centering
    \subfloat[Mall 2.]{
        \includegraphics[width=0.48\columnwidth]{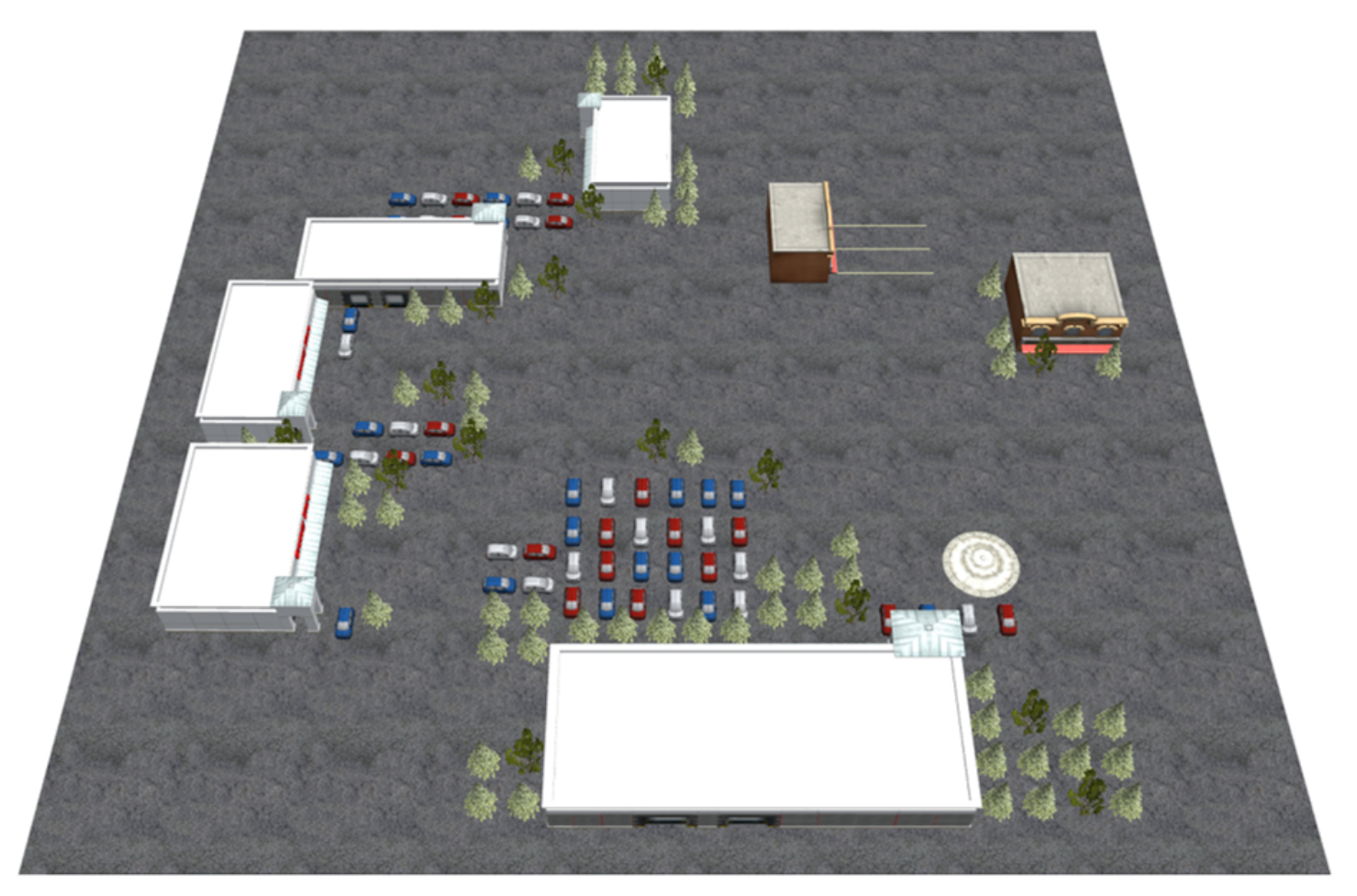}\label{fig:mall2}}\quad \hspace{-10pt}
    \centering
    \subfloat[Mall 3.]{
        \includegraphics[width=0.48\columnwidth]{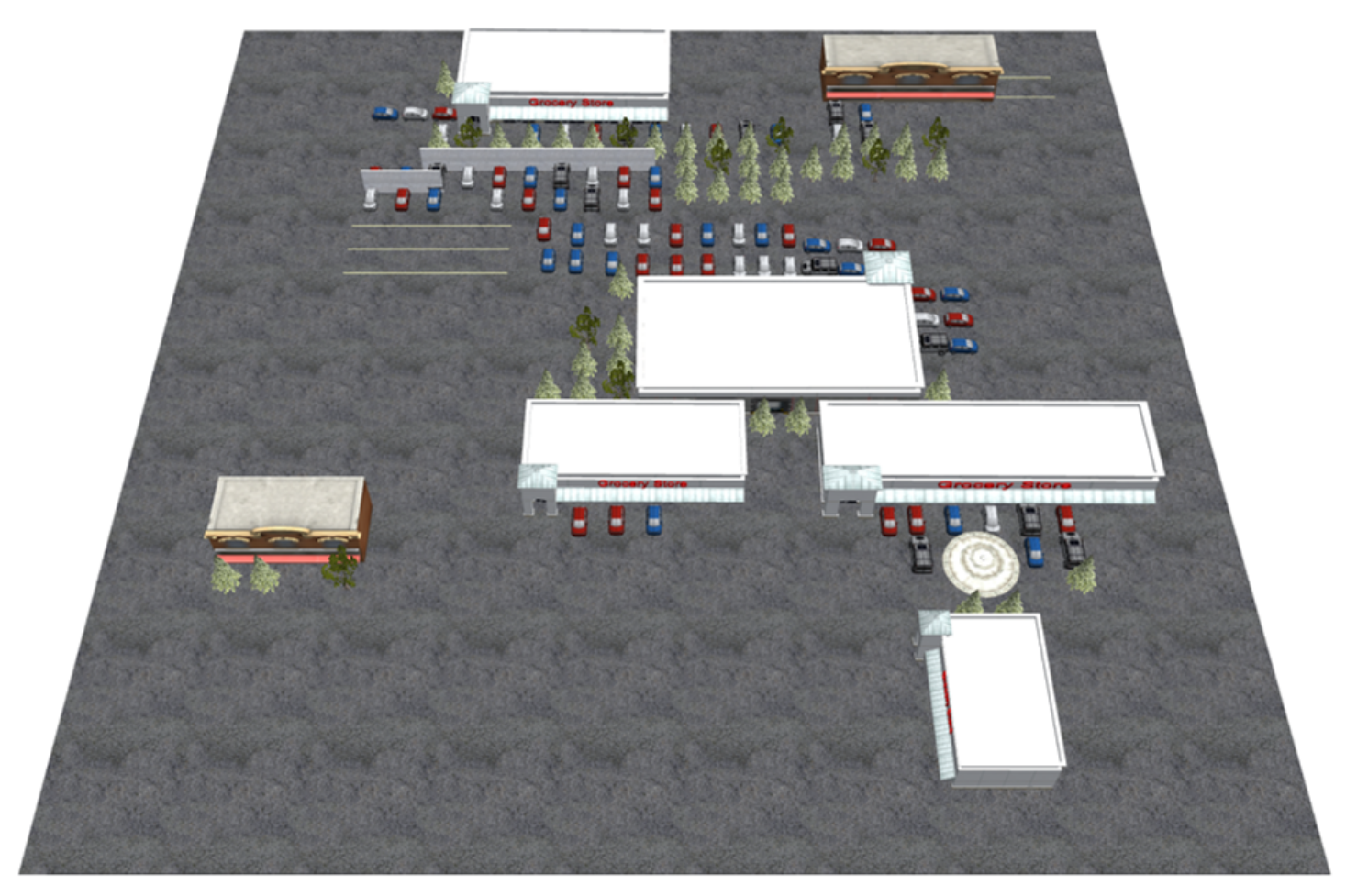}\label{fig:mall3}}\\
    \caption{An Overview of the test scenarios for simulation experiments.}\label{fig:simulation_scenario} 
    \vspace{-1.0em}
 \end{figure}

  \begin{figure*}[t]  \includegraphics[width=0.98\textwidth]{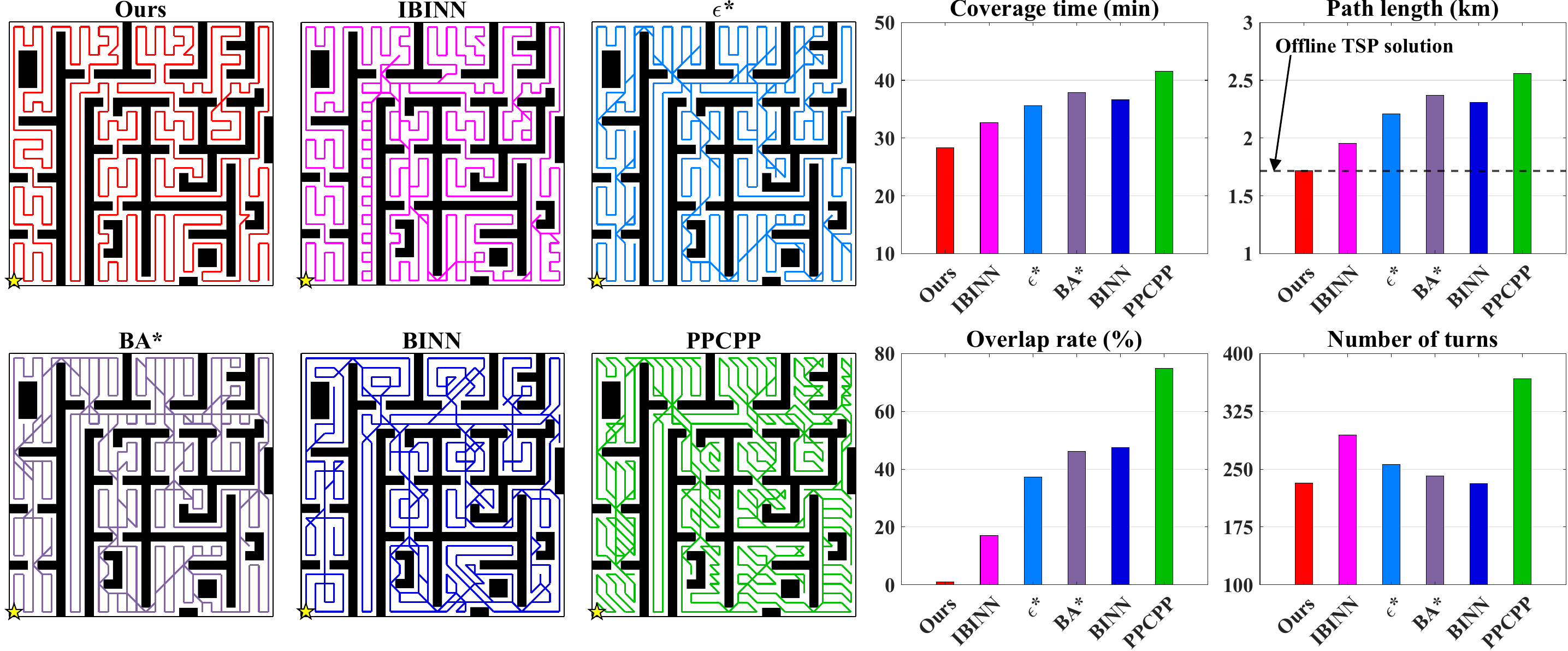}
    \caption{\textit{Office Scenario:} Performance comparison with the baseline algorithms.}
  \label{fig:simulation_res_office}
\end{figure*}

 \begin{figure*}[t]  \includegraphics[width=0.98\textwidth]{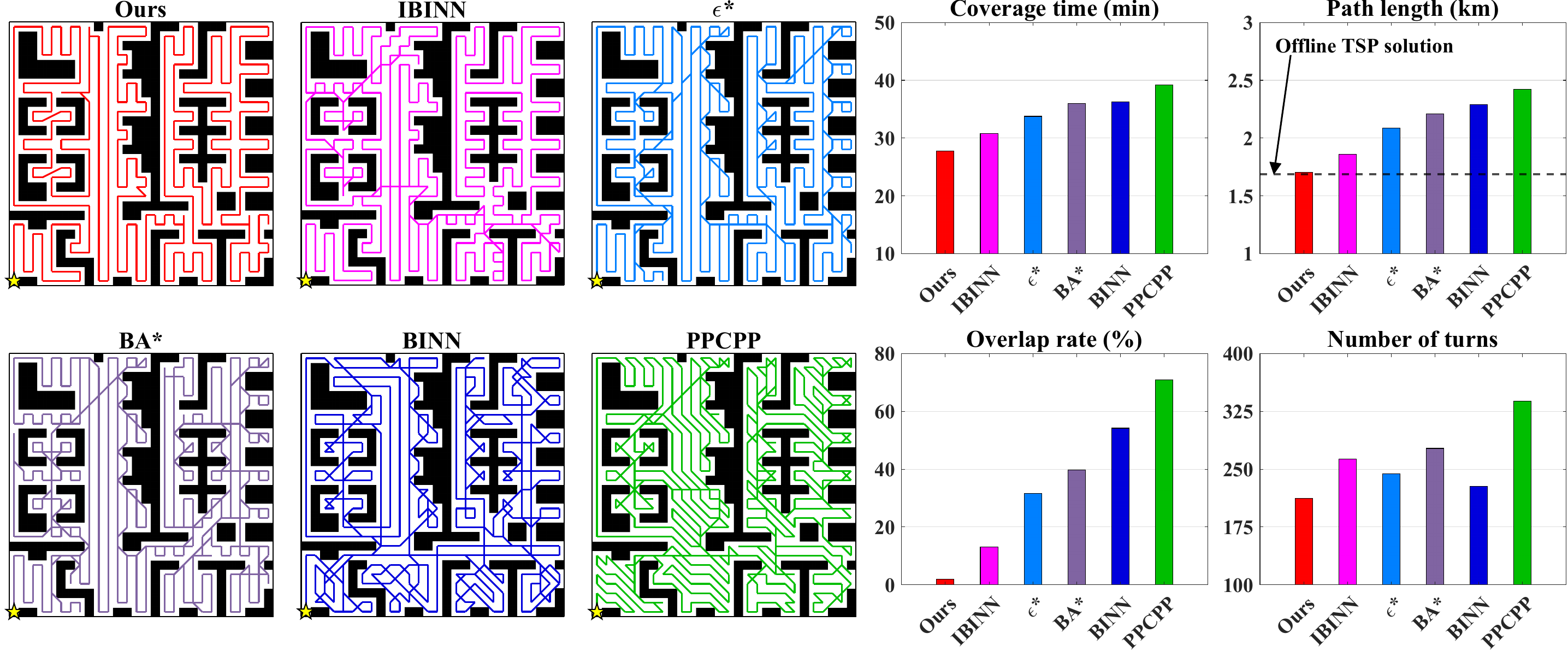}
    \caption{\textit{Warehouse Scenario:} Performance comparison with the baseline algorithms.}\label{fig:simulation_res_warehouse}
  \vspace{-1.0em}
\end{figure*}

 \begin{figure*}[t]  \includegraphics[width=0.98\textwidth]{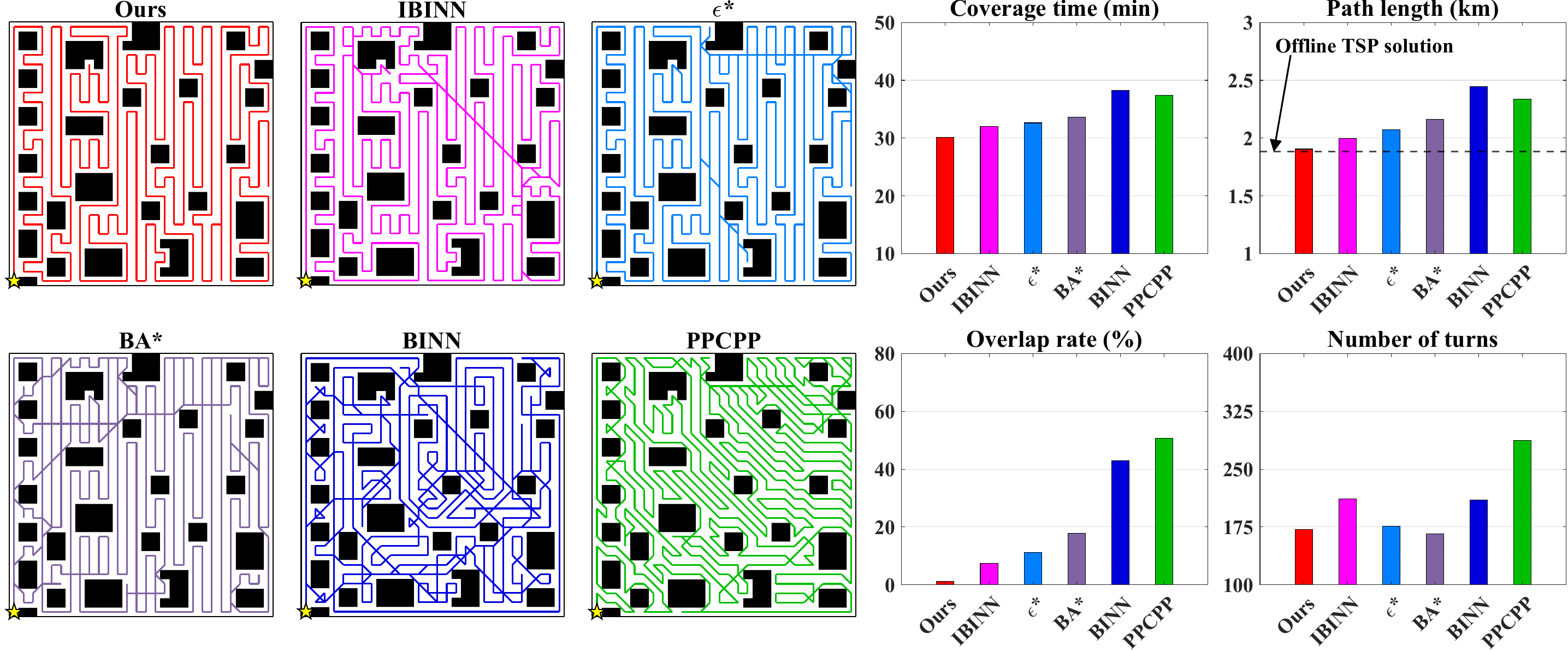}
    \caption{\textit{Forest Scenario:} Performance comparison with the baseline algorithms.}\label{fig:simulation_res_forest}
\end{figure*}

\begin{figure*}[t]  \includegraphics[width=0.98\textwidth]{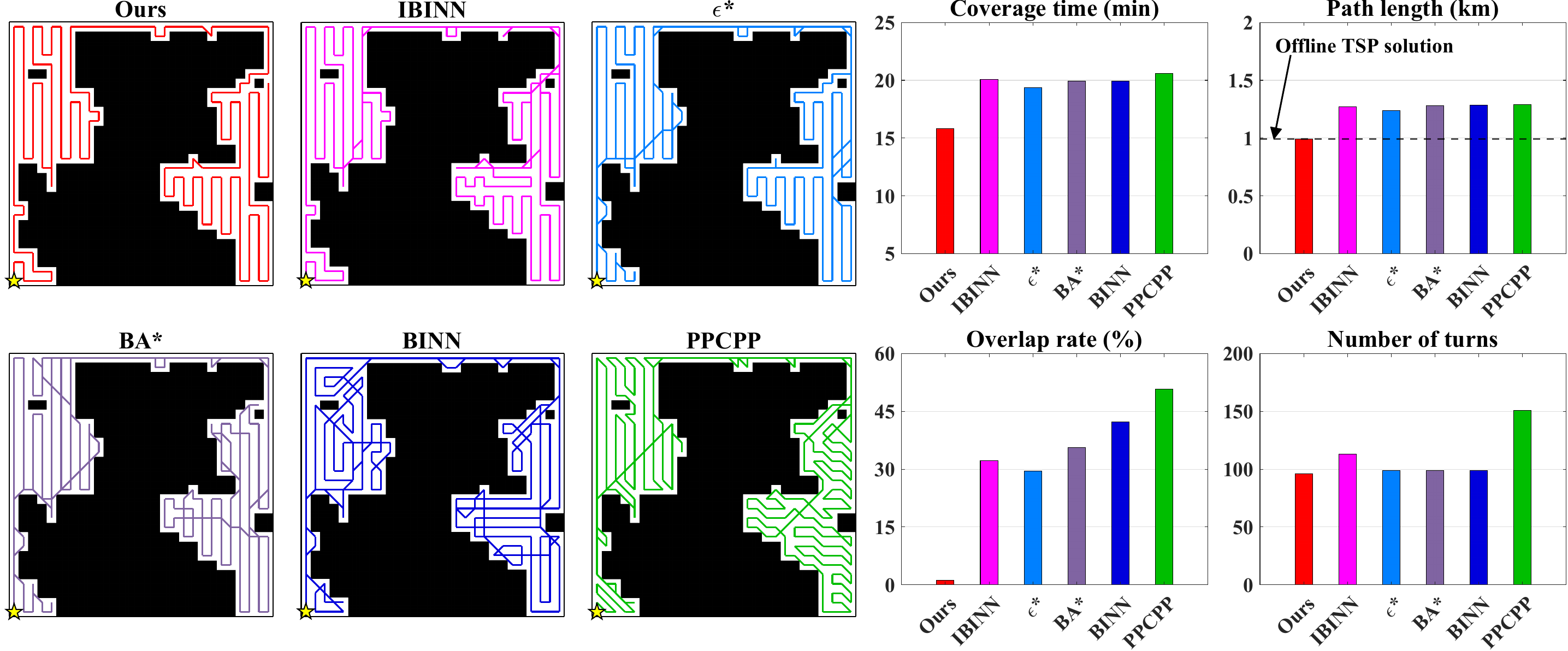}
    \caption{\textit{Mall 1 Scenario:} Performance comparison with the baseline algorithms.}\label{fig:simulation_res_mall_one}
  \vspace{-1.0em}
\end{figure*}

\begin{figure*}[t]  \includegraphics[width=0.98\textwidth]{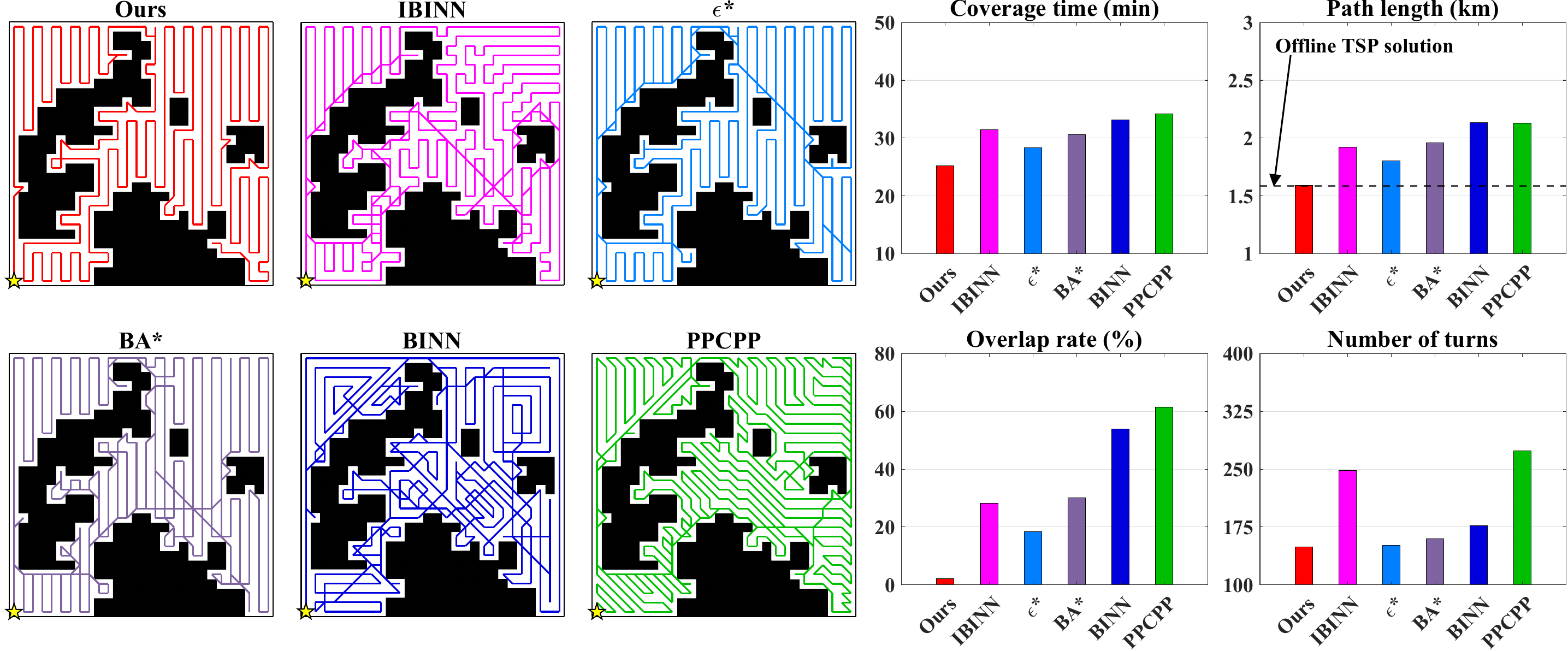}
    \caption{\textit{Mall 2 Scenario:} Performance comparison with the baseline algorithms.}\label{fig:simulation_res_mall_two}
\end{figure*}

\begin{figure*}[t]  \includegraphics[width=0.98\textwidth]{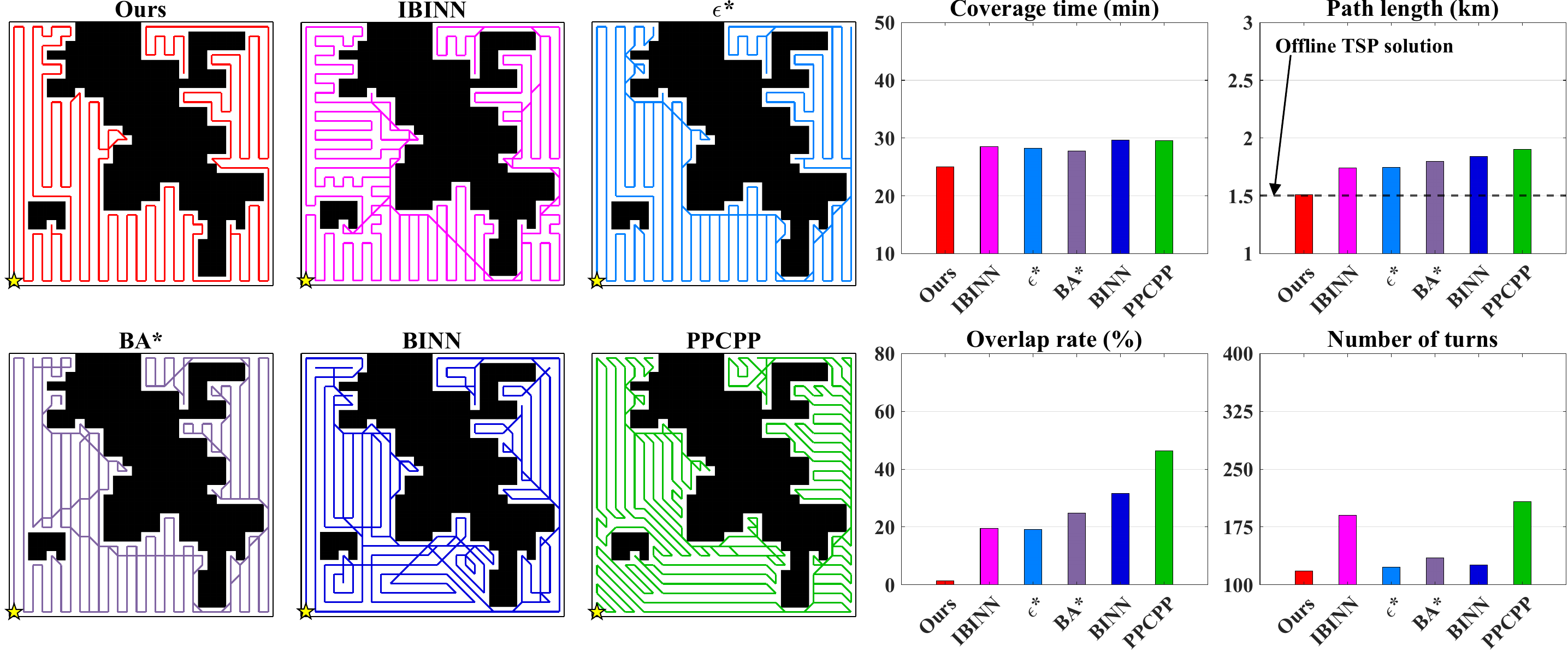}
    \caption{\textit{Mall 3 Scenario:} Performance comparison with the baseline algorithms.}\label{fig:simulation_res_mall_three}
  \vspace{-1.0em}
\end{figure*}

\begin{thm}[Approximation Ratio of Local Tour Refinement]
\label{thm:tour_approximation}
After $K$ tour updates induced by tree expansion, the global tour maintained by the proposed algorithm satisfies $\sigma_{\mathrm{inc}} =\frac{l_K}{l_K^{*}} \leq 1+\frac{2\sum_{k=1}^{K}(J_k+1)r_k}{l_K^{*}}$.
\end{thm}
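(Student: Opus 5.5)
We prove the bound by induction on the number of tour updates, tracking how much each local refinement can add to the tour length beyond an optimal tour. The quantities in the statement are read as follows. The tour after the $K$-th update has length $l_K$. The quantity $l_K^*$ is the optimal length of a tour visiting the active leaf nodes present after the $K$-th update. $J_k$ is the number of child nodes created in the $k$-th expansion. $r_k$ is the radius of the expanded parent's subarea: the maximum shortest-path distance over $\mathcal{T}_{\mathrm{tr}}$ from the parent center $p(v_m)$ to any cell of $\mathcal{A}(v_m)$, and hence to every child center, since each child center lies in $\mathcal{R}$ of a child, which is contained in $\mathcal{A}(v_m)$.

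The key intermediate claim concerns one expansion. Suppose $v_m$ with neighbors $n_a,n_b$ is replaced by $J$ children, with 2-opt output $\hat\pi$. We will show
\[
l_k \le l_{k-1} + 2(J_k+1) r_k .
\]
The construction is an explicit feasible fixed-endpoint sequence, which 2-opt cannot do worse than. This holds if 2-opt is initialized from it. Otherwise we use the monotonicity of 2-opt from any initial order, together with a bound valid for any order via the triangle inequality, which is the argument we will actually use.

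Take any order of the children. The edge $n_a\to$(first child) costs at most $d(n_a,p(v_m))+r_k$. Each of the $J-1$ child-to-child edges costs at most $2r_k$, by routing through $p(v_m)$. The edge (last child)$\to n_b$ costs at most $d(p(v_m),n_b)+r_k$. Summing and subtracting the removed edges $d(n_a,v_m)+d(v_m,n_b)$ gives an increase of at most $2r_k+2(J-1)r_k=2Jr_k\le 2(J+1)r_k$. The slack covers the boundary cases where $n_a$ or $n_b$ is missing, or where the parent center was projected and moved. Completed nodes removed from the tour only shorten it, by the triangle inequality (shortcutting). Summing over $k$ gives
\[
l_K \le l_0 + 2\sum_{k=1}^K (J_k+1) r_k .
\]

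It remains to relate $l_0$ to $l_K^*$. Take $l_0$ to be the length of the tour that the refinements started from, formally the base tour treated as reference. We argue that contracting each expanded family back to its parent center turns an optimal tour on the current leaves into a tour on the parent-level nodes. This contraction does not increase length, so the base tour length is at most $l_K^*$ plus nothing, provided the base tour is itself optimal; it is trivially so when it is the single root node. This gives $l_0\le l_K^*$, and dividing by $l_K^*$ gives the stated ratio.

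The main obstacle is this last comparison step. Distances are shortest paths over a changing $\mathcal{T}_{\mathrm{tr}}$, which only shrinks as obstacles are found, so earlier distances can be smaller than current ones. We will handle this by measuring all lengths in the final metric at update $K$, under which earlier refinements remain valid upper-bound constructions. We will also need to be careful that $r_k$ is defined in that same metric.
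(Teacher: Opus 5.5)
Your argument reaches the stated bound, but by a different route from the paper. One step in it is justified incorrectly, though the error does no damage to the conclusion.

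\textbf{How the two routes differ.} The paper tracks the gap $\epsilon_k=l_k-l_k^*$ one expansion at a time. It pairs the same triangle-inequality bound $l_k\le l_{k-1}+2J_kr_k$ that you derive with a lower bound $l_k^*\ge l_{k-1}^*-2r_k$. That lower bound comes from taking an optimal post-expansion tour, shortcutting out all but one child, and then substituting the parent for the surviving child. Together these give $\epsilon_k\le\epsilon_{k-1}+2(J_k+1)r_k$ with $\epsilon_0=0$. You instead telescope only the upper bound and compare $l_0$ with $l_K^*$ once, at the end.

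\textbf{The incorrect step.} You justify that comparison by saying that contracting a family back to its parent center does not increase length. This is false in general. Shortcutting is free, but replacing the surviving child by the parent center can cost up to $2r_k$. For example, this happens when the child centers lie on a shortest route between their tour neighbors but the parent center does not. That substitution cost is exactly the $2r_k$ the paper pays. Your $2r_k$ of slack does not absorb it, because you spend it on boundary cases that need no slack: a missing $n_a$ or $n_b$ only lowers the increase below $2J_kr_k$.

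\textbf{Why it is harmless, and what your route buys.} You do not need the contraction at all. The base tour is the single root node, so $l_0=0\le l_K^*$ directly. Your telescoping then yields $l_K\le 2\sum_{k=1}^{K}J_kr_k$. Since $l_K^*\ge 0$, this implies the statement and is in fact stronger. It shows that, starting from the root, the paper's lower-bound step is superfluous, and so is the $+1$ in $J_k+1$.

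Your route is also unaffected by completed nodes leaving the tour between expansions. Such removals can lower $l^*$ by more than $2r_k$, and the paper's per-step recursion does not account for them.

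\textbf{What the paper's route buys.} The paper's gap recursion is the form that stays meaningful if refinement starts from a nontrivial optimal tour with $l_0=l_0^*>0$. In that setting your contraction step genuinely fails, and each undone expansion must be charged $2r_k$.

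\textbf{Minor remarks.} Your $r_k$, the radius of all of $\mathcal{A}(v_m)$, is larger than the paper's $\max_{n\in\mathcal{C}_k}d(n_{p_k},n)$. However, you only use $d(p(v_m),c)\le r_k$ for child centers $c$, so your argument works with the paper's definition.

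The paper simply treats the A$^*$ metric and the node centers as frozen, so you do not need your final-metric device to match its rigor. As sketched, that device would not handle drifting leaf centers in any case. It would also force $r_k$ to be measured in the final metric rather than at expansion time.
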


\begin{proof}
Consider the $k$-th tree expansion. Let $l_k$ and $l_k^{*}$ denote the lengths of the updated tour and globally optimal tour, respectively. Suppose the expanded node $n_{p_k}$ generates $J_k$ child nodes $\mathcal{C}_k$. We define the refinement radius as $r_k=\max_{n\in\mathcal{C}_k}d(n_{p_k},n)$, which represents the maximum distance between the parent node and its child nodes. Since every child node is within distance $r_k$ of $n_{p_k}$, the triangle inequality implies that replacing the parent by any feasible ordering of its $J_k$ child nodes introduces at most $r_k$ at each boundary connection and at most $2r_k$ between two consecutive child nodes. Hence, the resulting increase in tour length is bounded by $r_k+2(J_k-1)r_k+r_k=2J_kr_k$, yielding $l_k\leq l_{k-1}+2J_kr_k$. If the expanded parent is an endpoint of the tour, one boundary connection is absent and the same upper bound still holds.

Next, consider the globally optimal tour. We construct a feasible tour for the pre-expansion node set from this optimal tour. Specifically, all but one child node in $\mathcal{C}_k$ are removed by directly connecting their predecessor and successor nodes. By the triangle inequality, these shortcut operations do not increase the tour length. The remaining child node is then replaced by its parent $n_{p_k}$. Since the distance between the parent and any child is at most $r_k$, this replacement increases the two incident connections by at most $2r_k$ in total. Therefore, the constructed pre-expansion tour has length no greater than $l_k^{*}+2r_k$. Let $l_{k-1}^{*}$ be the globally optimal tour length before expansion, we obtain $l_k^{*}\geq l_{k-1}^{*}-2r_k$. Let $\epsilon_k=l_k-l_k^{*}$ denote the suboptimality gap between the updated tour and globally optimal tour. Combining the above results gives $\epsilon_k\leq\epsilon_{k-1}+2(J_k+1)r_k$. The initial global tour contains only the root node and is therefore trivially optimal, giving $\epsilon_0=0$. Recursively applying this inequality over the $K$ expansion events yields $l_K-l_K^{*} \leq 2\sum_{k=1}^{K}(J_k+1)r_k$. Therefore, for $l_K^{*}>0$, the approximation ratio satisfies $\sigma_{\mathrm{inc}} =\frac{l_K}{l_K^{*}} \leq 1+\frac{2\sum_{k=1}^{K}(J_k+1)r_k}{l_K^{*}}$.

The bound indicates that the approximation ratio introduced by incremental tour refinement is determined by the number of newly generated child nodes and their spatial deviations from the corresponding parent nodes. Since tree expansion is local, the refinement radius $r_k$ is typically small, leading to a limited degradation from global optimality.
\end{proof}

\section{Results and Discussion}
\label{sec:results}

This section evaluates TRACE through high-fidelity simulations in Gazebo and real-robot experiments. The simulation studies compare TRACE with five baseline algorithms in terms of coverage time, path length, overlap ratio, and number of turns, and further examine its path quality and robustness to sensing and localization uncertainties.

\subsection{Validation on a Simulation Platform}

\subsubsection{Simulated Robot and Test Scenarios}

A mobile robot is simulated with motion constraints including a maximum translational velocity of $2\,\mathrm{m/s}$ and a minimum turning radius of $1\,\mathrm{m}$. It is equipped with a LiDAR providing a $360^\circ$ field of view and a maximum sensing range of $12\,\mathrm{m}$. Six $90\,\mathrm{m} \times 90\,\mathrm{m}$ with different obstacle arrangements and spatial structures are constructed for evaluation, as shown in Fig.~\ref{fig:simulation_scenario}. The scenarios include representations of an office, warehouse, forest, and mall and are designed to cover different levels of obstacle density, corridor structure, and spatial complexity. Each environment is represented by a $30 \times 30$ tiling for mapping and coverage planning. At the beginning of each run, the interior obstacle layout is unknown to the robot and is progressively revealed through onboard sensing during coverage.

\subsubsection{Baseline Algorithms} 

The proposed algorithm is compared with five online CPP algorithms: BINN~\cite{luo2008bioinspired}, BA$^*$~\cite{viet2013ba}, $\epsilon^*$~\cite{song2018}, PPCPP~\cite{hassan2019ppcpp}, and IBINN~\cite{huo2025}. All algorithms are implemented in C$++$ and evaluated using the same robot model, sensing configuration, map resolution, starting conditions, and computing platform. The experiments are conducted on a computer equipped with a $2.60$-GHz processor and $32$ GB of RAM. Algorithm-specific parameters follow the settings recommended in the corresponding publications.

\subsubsection{Performance Metrics} 

The following performance metrics are used to quantitatively compare the algorithms:

\begin{itemize}
    \item Coverage Time: elapsed time from the start of operation until all obstacle-free cells are covered. This metric jointly reflects travel distance and maneuvering costs under the robot motion constraints.
    \item Path Length: total distance traveled by the robot from initialization until complete coverage.
    \item Overlap Ratio: the amount of repeated coverage normalized by the total obstacle-free area.
    \item Number of Turns: the cumulative heading change along the executed trajectory, normalized by $90^\circ$.
\end{itemize}

\begin{figure}[t]
        \centering        \includegraphics[width=0.4\textwidth]{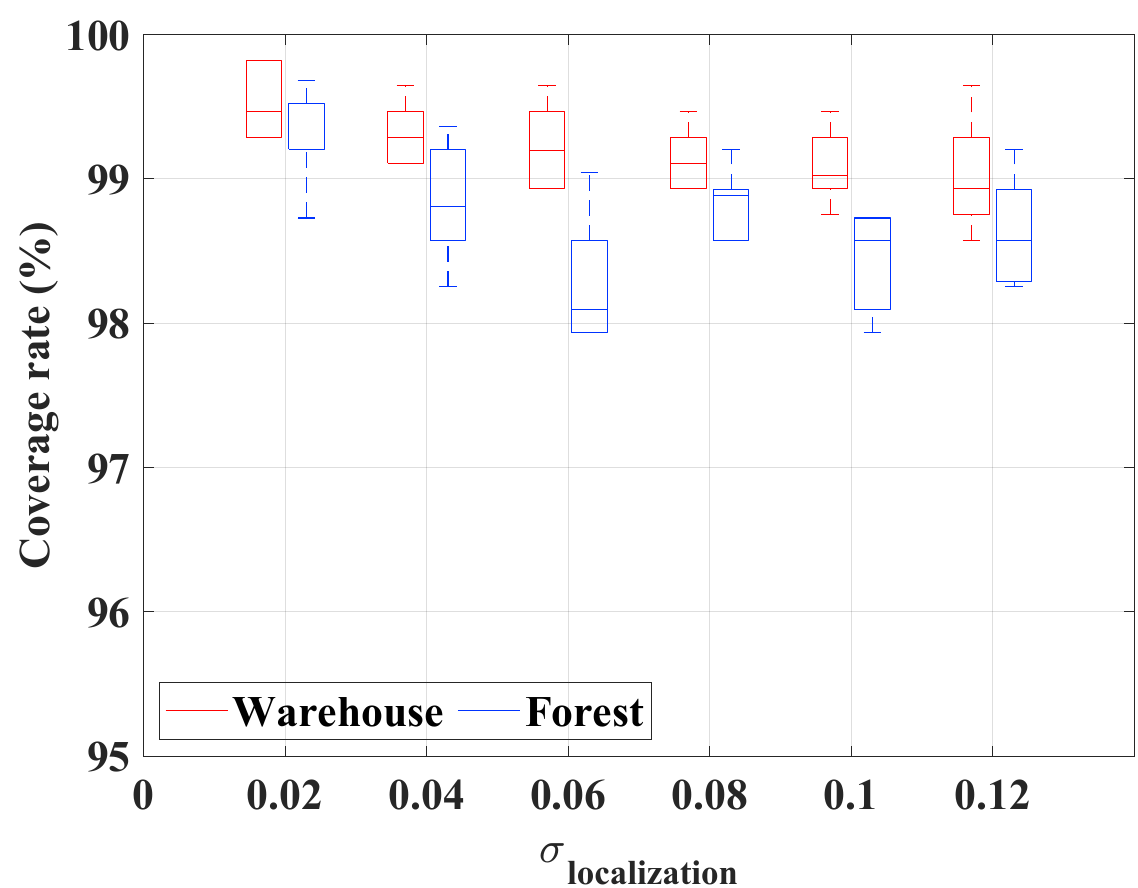}
    \caption{Coverage rate versus localization noise.}\label{fig:CoverageRatio_obsMap2_obsMap3} 
    \vspace{-1.0em}
 \end{figure}

 \begin{figure*}[t]  \includegraphics[width=0.98\textwidth]{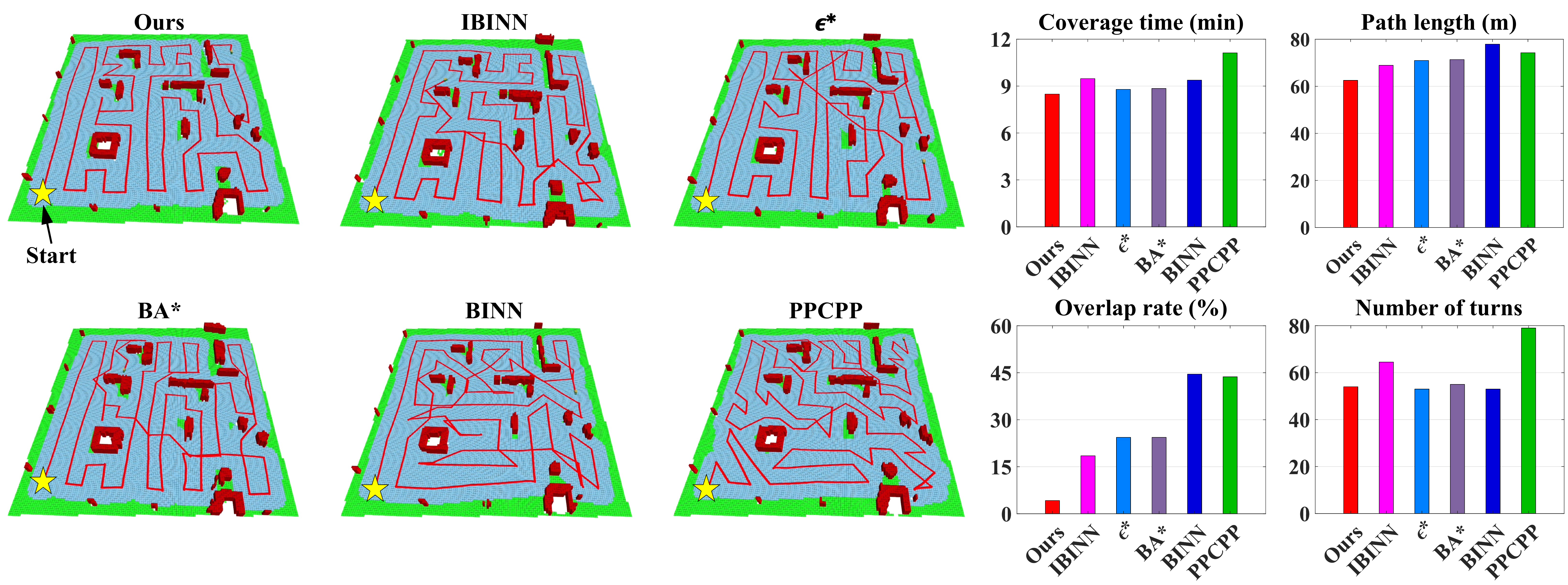}
    \caption{\textit{Real-Robot Experiment:} Performance comparison with the baseline algorithms in a laboratory with complex obstacle layout.}\label{fig:experiment_res}
  \vspace{-1.0em}
\end{figure*}

\subsubsection{Comparative Evaluation Results}

Figs.~\ref{fig:simulation_res_office}-\ref{fig:simulation_res_mall_three} shows the coverage paths generated by TRACE and the baseline methods in six scenarios. The robot starts from the bottom-left corner and initially has no knowledge of the interior obstacle layout. As coverage proceeds, each method updates its plan using the newly observed environment until all reachable free cells have been covered. As observed, TRACE produces efficient coverage paths with significantly less overlap across the six scenarios. These figures also present the quantitative results, where TRACE achieves favorable performance in coverage time, path length, overlap ratio, and number of turns. The average total computation time of TRACE per iteration, including hierarchical coverage tree construction, global tour planning, and local coverage path generation, is on the order of $10^{-2}\,\mathrm{s}$.

 \begin{figure}[t]
        \centering        \includegraphics[width=0.46\textwidth]{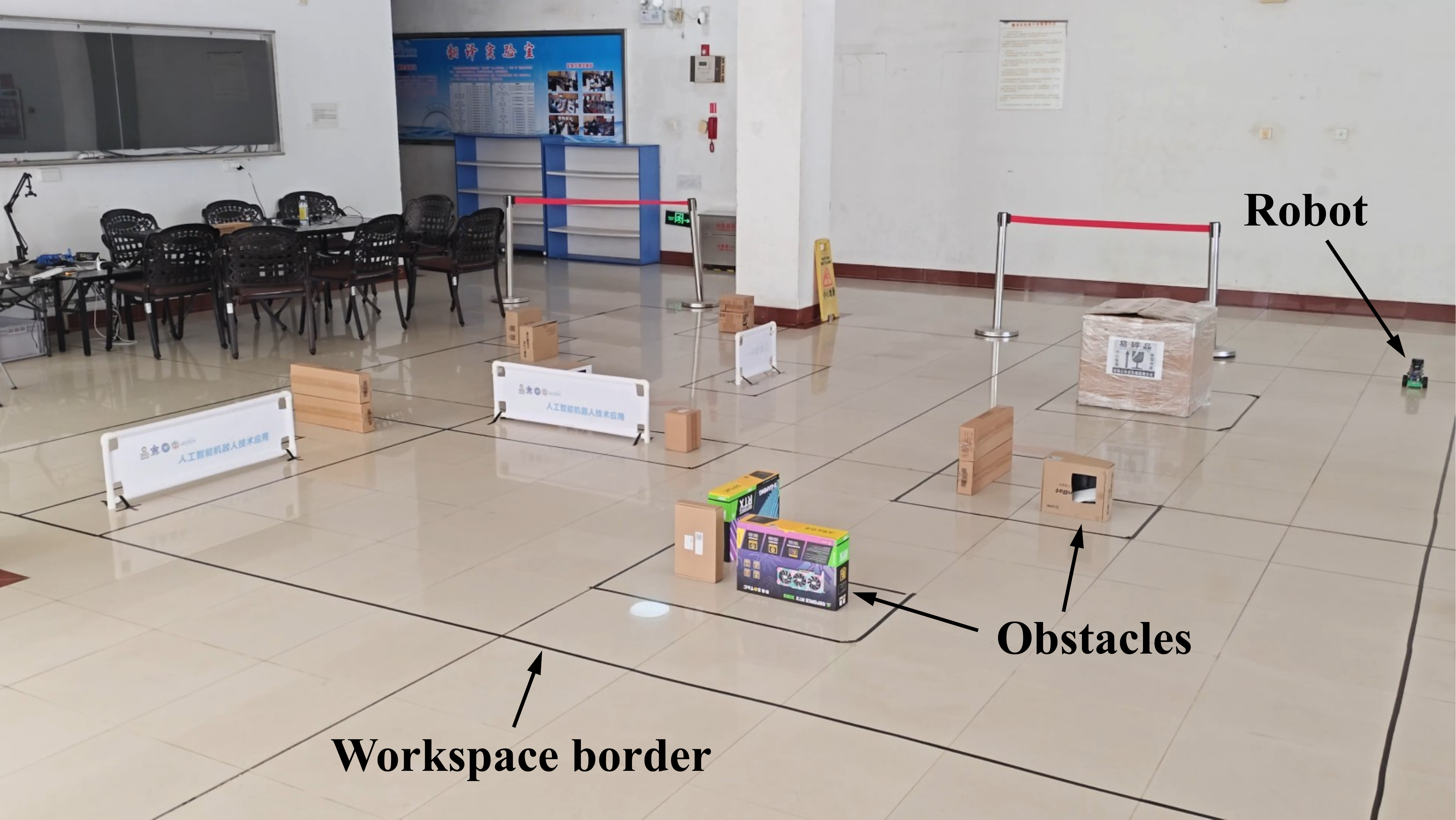}
    \caption{Scenario for real-robot experiment.}\label{fig:experiment_scenario} 
    \vspace{-1.0em}
 \end{figure}

\subsubsection{Comparison with the Offline TSP Solution}

We further assess the path quality of TRACE against an offline TSP solution. For each test environment, a cell-level TSP is constructed using the full map and solved exactly with the Lin-Kernighan-Helsgaun algorithm~\cite{Helsgaun2000}. The resulting path length provides an offline reference for the shortest traversal under the adopted graph-distance formulation. As shown in Figs.~\ref{fig:simulation_res_office}-\ref{fig:simulation_res_mall_three}, the online paths generated by TRACE remain close to this reference across all six scenarios, despite operating without prior knowledge of the obstacle layout.

\subsubsection{Performance in the Presence of Uncertainties}

To examine the robustness of TRACE to imperfect onboard perception, additional simulations are conducted with noise applied to range sensing, heading estimation, and localization. These perturbations affect both obstacle placement in the map and the estimated robot pose used for online planning. Independent zero-mean Gaussian noise is added to the corresponding measurements. The range and heading noise levels are fixed at $\sigma_{\mathrm{range}}=1.5\,\mathrm{cm}$ and $\sigma_{\mathrm{heading}} = 0.5^\circ$, respectively, while localization uncertainty is varied from $\sigma_{\mathrm{localization}} =0.02\,\mathrm{m}$ to $0.12\,\mathrm{m}$~\cite{shen2026cstar}. Robustness is evaluated using the actual coverage rate, computed from the ground-truth robot trajectory and obstacle map. The coverage rate is defined as the fraction of ground-truth obstacle-free cells reached by the robot's coverage footprint during execution. Fig.~\ref{fig:CoverageRatio_obsMap2_obsMap3} summarizes the results over $10$  Monte Carlo trials at each localization-noise level for warehouse and forest scenarios. Each box reports the distribution of the resulting coverage rates, with the center line indicating the median and the lower and upper box boundaries corresponding to the $25$th and $75$th percentiles, respectively.

\subsection{Validation by Real Experiments}

The performance of TRACE is further evaluated through real-robot experiments. As shown in Fig~\ref{fig:experiment_scenario}, the scenario is conducted in a $7\,\mathrm{m}\times7\,\mathrm{m}$ laboratory space with multiple obstacles. The mobile robot is equipped with an RPLIDAR S2L with a sensing range of $8$ m for obstacle detection and an MPU9250 IMU for estimating velocity, acceleration, and orientation. The gmapping algorithm~\cite{grisetti2007improved} is used for localization. All sensing, mapping, localization, and coverage planning are performed on an NVIDIA Jetson Nano. 

TRACE is compared with the same five baseline algorithms used in the simulations. All these methods are evaluated using the same robot platform, map resolution, and initial position. For each method, the robot incrementally maps the initially unknown environment and generates its coverage path until complete coverage is achieved. Fig.~\ref{fig:experiment_res} shows the coverage paths generated by TRACE and the baseline algorithms. As seen, TRACE produces more coherent trajectories with fewer long revisiting segments and less repeated traversal than the baseline methods. This figure also provides quantitative comparison results in terms of coverage time, path length, overlap ratio, and number of turns. Overall, TRACE achieves significant improvements over the baseline algorithms in all metrics, thus validating its effectiveness on real robots.

\section{Conclusions and Future Work} \label{sec:conclusions}

This paper presents a novel online CPP algorithm, called TRACE, for real-time coverage of unknown environments. A hierarchical coverage tree is constructed to globally represent the evolving connectivity of the uncovered space as new obstacles are discovered and coverage progresses. Based on the updated tree, an incremental global tour is maintained by locally refining only the portions affected by tree expansion while preserving the visiting order of unchanged regions. Guided by this tour, the local planner performs efficient dead-end recovery and generates global-tour-aware coverage paths to reduce redundant travel. Theoretical analysis established the computational complexity and complete coverage property of TRACE, and provided an approximation bound for the incremental global tour refinement. Extensive simulations and real-world robot experiments demonstrated that TRACE consistently improves coverage efficiency over existing online CPP methods in terms of coverage time, path length, overlap ratio, and number of turns. Future work will investigate extensions of TRACE to dynamic environments~\cite{shen2023smart,shen2026motion} and coverage problems in three-dimensional environments~\cite{ou2025pl}.

\balance
\bibliographystyle{IEEEtran}
\bibliography{reference}

\end{document}